\newtheorem{theorem}{Theorem}
\newtheorem{lemma}{Lemma}
\newtheorem{corollary}{Corollary}
\newtheorem{proposition}{Proposition}
\newtheorem{remark}{Remark}
\theoremstyle{definition}
\newcommand{\svm}{\mathrm{(SVM)}}
\newcommand{\nsvm}{\mathrm{(N\mbox{-}SVM)}}
\newcommand{\rsvm}{\mathrm{(R\mbox{-}SVM)}}
\newcommand{\odrsvm}{\mathrm{(OD\mbox{-}SVM)}}
\newcommand{\rsvmc}[1]{\mathrm{(R\mbox{-}SVM(}{#1}\mathrm{))}}
\newcommand{\odrsvmc}[1]{\mathrm{(OD\mbox{-}SVM(}{#1}\mathrm{))}}
\newcommand{\cm}{\mathrm{(CM)}}
\newcommand{\rcm}{\mathrm{(R\mbox{-}CM)}}
\newcommand{\odrcm}{\mathrm{(OD\mbox{-}CM)}}
\newcommand{\rcmd}[1]{\ensuremath\mathrm{(R\mbox{-}CM(}{#1}\mathrm{))}}
\newcommand{\odrcmd}[1]{\ensuremath\mathrm{(OD\mbox{-}CM(}{#1}\mathrm{))}}
\newcommand{\up}{\mathrm{(P)}}
\newcommand{\rp}{\mathrm{(R\mbox{-}P)}}
\newcommand{\dup}{\mathrm{(D\mbox{-}P)}}
\newcommand{\drp}{\mathrm{(DR\mbox{-}P)}}
\newcommand{\odup}{\mathrm{(OD\mbox{-}P)}}
\newcommand{\prj}{\mathrm{P}}
\newcommand{\sth}{\mathrm{s.t.}}
\newcommand{\R}{\mathbb{R}}
\newcommand{\argmax}{\mathrm{argmax}}
\newcommand{\ch}{co}
\newcommand{\sgn}{\mathrm{sign}}
\newcommand{\dist}{\mathrm{dist}}
\newcommand{\cch}{\overline{co}}
\newcommand{\B}{\mathcal{B}}
\newcommand{\X}{X}
\newcommand{\Xs}{X^*}
\newcommand{\lsc}{lower-semicontinuous }
\newcommand{\usc}{upper-semicontinuous }
\newcommand{\xo}[1]{\ensuremath{\overline{#1}}}
\newcommand{\dea}[2]{\ensuremath{{#1}=1,\dots,{#2}}}
\newcommand{\nd}[1]{\ensuremath{\|{#1}\|}} 
\begin{document}

\title{Robust SVM Optimization in Banach spaces}
\author{Mohammed Sbihi$^\dag$ \and Nicolas Couellan$^\ddag$}

\maketitle

\noindent {\small $^\dag$ ENAC, Universit\'e de Toulouse\\
$\qquad$ mohammed.sbihi@enac.fr\\
$^\ddag$ ENAC and  Institut de Math\'{e}matiques de Toulouse, Universit\'{e} de Toulouse\\
$\qquad$ nicolas.couellan@recherche.enac.fr}



\begin{abstract}
We address the issue of binary classification in Banach spaces in presence of uncertainty. We show that a number of results from classical support vector machines theory can be appropriately generalised to their robust counterpart in Banach spaces. These include the Representer Theorem, strong duality for 
the associated Optimization problem as well as their geometric  interpretation. Furthermore, we propose a game theoretic interpretation by expressing a Nash equilibrium problem formulation for the more general problem of finding the closest points in two closed convex sets when the underlying space is reflexive and smooth.

{\bf Keywords} Support vector machines,  robust optimizatio, Nash equilibrium, duality mapping.
\end{abstract}

\section{Introduction}

Support Vector machines (SVM) \cite{Vap,scholkopf} have been widely used for data classification. Their success is due to sound theoretical foundations and good generalization properties. They address the classification problem by finding the hyperplane that achieves maximum sample margin which leads to minimizing the norm of the classifier parameters.
The standard assumption is that the training data (or their features) lie in a Hilbert space. However, in some applications where objects are complex such as images, signals, trajectories in robotics or aeronautic, data representation may turn out to be restrictive or even inefficient in the Euclidean setting. It might be interesting to consider more general representation spaces that better capture and preserve the topological properties of the training samples \cite{DelEtAl}. For instance, Banach spaces may be used to model images in a very general manner \cite{AdlLun}.  Continuous image models that do not rely on the concept of pixel discretization can be regarded as living in the space of measurable functions over the unit square. The use of a specific norm defines a choice of distances between images that can account for specific image features, like the position of edges with Sobolev norms. 

A few studies have demonstrated that classical binary classification formulation may be derived also in non-Euclidean spaces. For example, in \cite{DerLee} a semi-inner-product is considered to formulate a binary classification problem in Banach spaces. In \cite{LinYe}, the author also proposed a non-Euclidean setting. General kernels methods in Banach spaces were also investigated in \cite{XuYe,ZhaEtAl}.  
 
Data uncertainties are usually not taken into account in classification models, although they occur most of the time. In order to design models that are immune to noise, robust formulations of SVM models have been proposed in the past \cite{BenGhaNem,CouJan,Tra}. The idea is to consider bounded additive noise perturbations of input samples and formulate a robust counterpart training optimization problem when considering worst case scenarios.

Combining the above concepts by considering data and uncertainties that lie in general Banach spaces, the main contribution of this study is to propose a theoretical framework that generalizes from an optimization point of view the concept of robust SVM in such spaces.
To do so, extending results from robust optimization duality  \cite{BecBen}, an optimistic dual counterpart problem is derived and robust strong duality is shown to hold under some linearity properties with respect to the uncertainties. Section \ref{robust-duality} covers these aspects. The application of these duality results to the case of robust SVM  nicely lead to a representer theorem in Banach spaces. Unlike \cite{DerLee} where a supporting semi-inner product is used in the non Euclidean setting in place of the inner-product, we propose to use the duality product and show that the representer theorem still holds in this case. Additionally, an uncertain hard margin separation problem and its robust counterpart in Banach spaces are formulated. Furthermore a geometric perspective of the problem is proposed. It extends the  interpretation of SVM separation in Euclidean spaces given in \cite{BenBre}. This is the aim of sections \ref{representer-theorem} and \ref{geometric-view}.
Finally, from this geometric point of view, a game theoretic formulation  for the robust classification problem is further proposed, extending in Banach spaces preliminary results from \cite{Cou}. These developments are discussed in Section \ref{game-formulation}.

 \section{Problem statement}

Let X be a Banach space and $\Xs$ be its dual, that is, the space of all real continuous linear functionals on $X$. We recall that $\Xs$ is a Banach space endowed with dual norm defined by
$$
\|f\|=\sup_{\|x\| \leq 1}|f(x)|, \quad \forall f \in \Xs.
$$
There is a natural duality between $X$ and $\Xs$ determined by the bilinear functional
$\langle\cdot,\cdot\rangle : X\times \Xs \rightarrow \R$ defined by
$$
\langle x, f\rangle =f (x); \forall x\in X, f\in \Xs.
$$

We first recall the standard SVM methodology \cite{Vap}  to find the maximum margin separating hyperplane between two classes of data points.

Let $x_i\in \X$ be a collection of input training vectors for \dea{i}{m}
and $y_i \in \{-1,1\}$ be their corresponding labels. If the data are linearly separable, then there exists a  linear functional $w\in \Xs$  and an offset $b \in \R$ such 
that $y_i (\langle x_i,w\rangle+b)>0$ for all \dea{i}{m}. By rescaling  $w$ and $b$, we may assume without loss of generality that the points closest to the hyperplane $H(w,b):=\{x\in X| \langle x,w\rangle+b=0\}$ satisfy $|\langle  x_i, w\rangle +b|=1$. Thus $H$ may be placed in the canonical form   $ y_i (\langle x_i ,w \rangle  + b)\geq 1,$  for all $\dea{i}{m}$.  With this form, the margin of the hyperplane is $\|w\|^{-1}$ (see (\ref{distToHyp})).
We obtain the SVM problem
$$
\svm\qquad 
\begin{array}{ll}
 \displaystyle\min_{w \in \Xs} & \frac{1}{2}\nd{w}^2 \\
 \sth  & y_i(\langle x_i, w \rangle + b)\geq 1,\quad \dea{i}{m}.
\end{array} 
$$
The classifier is then  given by $f (x) = \sgn(\langle  x, w \rangle   + b)$. It is worth mentioning that unlike the formulation given in \cite{DerLee}, we use the duality product instead of a semi-inner-product.
Considering now instead a set of noisy training vectors $\{\tilde{x}_i  \in \X, \quad \dea{i}{m} \}$    where $ \tilde{x}_i= x_i+\delta_i$ for all $\dea{i}{m}$ and $\delta_i$ is a random perturbation. This can be captured by the following  (noisy SVM) problem
$$
\nsvm\qquad 
\begin{array}{ll}
 \displaystyle\min_{w \in \Xs} & \frac{1}{2}\nd{w}^2 \\
 \sth  & y_i(\langle x_i+\delta_i, w \rangle + b)\geq 1,\quad \dea{i}{m}.
\end{array} 
$$

Observe that the problem involves the random variable $\delta_i$ and  can not
be solved as such. Extra knowledge on the perturbations is needed to transform it into a
deterministic and numerically solvable problem. In general  the perturbation $\delta_i$ is  known to reside in some uncertainty set $\Delta_i \subset \X$. For instance, \cite{CouJan} considers,  when $\X=\R^n$, the  uncertainty set as  $\|\Sigma^{1/2}\delta_i \|_p\leq \gamma_i,$ \dea{i}{m}, where is $\Sigma$ is some positive definite matrix and $p\geq 1$.  Various choices of $\Sigma_i$  and $p$  will lead to various types of uncertainties such as for example box-shaped uncertainty ($\|\delta_i\|_\infty \leq \gamma_i$), spherical uncertainty  ($\|\delta_i\|_2\leq \gamma_i$), or ellipsoidal uncertainty ($\delta_i^T \Sigma^{-1} \delta_i \leq \gamma^2_i$).
To design a robust model, one has to satisfy the inequality constraint in Problem (N-SVM) for every realizations 
of $\delta_i$. This can be done by ensuring the constraint in the worst case scenario for $\delta_i$, leading to the following robust counterpart optimization problem:
$$
\rsvm\qquad 
\begin{array}{ll}
 \displaystyle\min_{w \in \Xs} & \frac{1}{2}\nd{w}^2 \\
 \sth  & \displaystyle\min_{x_i\in K_i} y_i(\langle x_i, w \rangle + b)\geq 1,\quad \dea{i}{m},
\end{array} 
$$
with $K_i=x_i+\Delta_i$ and with an abuse of notation $\tilde{x}_i$ is denoted by $x_i$.

 As previously announced, the aim of this paper is to generalize some known results \cite{BenBre,Cou,DerLee} from classical SVM to their robust counterpart in Banach spaces. The following section prepares the ground by recalling some facts about robust optimization and by generalizing a robust strong duality result \cite{BecBen} to a Banachic framework  tailored to robust SVM related problems.

\section{A robust Optimization detour}\label{robust-duality}

We state and adapt  in this section some results from robust optimization to our context.  Consider a general uncertain optimization problem on some Banach space $\B$
$$
\up \qquad \inf_{x\in\B}\left\{f(x): g_i(x,u_i)\leq 0,\ \dea{i}{m}\right\},
$$
where $f:  \B \rightarrow \R$ is a \lsc convex function and $g_i :\B \times U_i \rightarrow \R, \dea{i}{m}$, $g_i(\cdot,u_i)$ is convex continuous,  $g_i(x,\cdot)$ is \usc   and $u_i$ is the uncertain parameter which is only known to reside in certain convex compact uncertainty set $U_i.$
Robust optimization, which has emerged as a powerful deterministic approach for studying mathematical programming under uncertainty \cite{BecBen,BenGhaNem,JeyEtAl} associates with the uncertain program $\up$
its robust counterpart,
$$
\rp \qquad \inf_{x\in\B}\left\{f(x): \sup_{u_i\in U_i} g_i(x,u_i)\leq 0,\ \dea{i}{m}\right\},
$$
where the uncertain  constraints are enforced for every possible value of the parameters within their prescribed uncertainty sets $U_i$.
The functions $G_i: \B\ni x\mapsto  \sup_{u_i\in U_i} g_i(x,u_i)$, \dea{i}{m}
 are convex and continuous as point-wise maxima of convex continuous functions. 
It is known \cite[Theorem 3.9]{BarPre} that under the following Slater condition:
\begin{equation}\label{SC}
\mbox{There exists a point } x_0 \in \B \mbox{ such that } G_i(x_0)<0,\  \dea{i}{m},
\end{equation} a  point $\xo{x}\in \B$ is an optimal solution for $\rp$ if and only if there exists $\xo{\lambda} \in \mathbb{R}_+^m$ such that 
\begin{eqnarray}
& &G_i(\xo{x}) \leq 0,\quad  \dea{i}{m},\label{CFeas}\\
& & 0\in \partial f(\xo{x}) +\displaystyle\sum_{i=1}^m \xo{\lambda}_i \partial G_i(\xo{x}),\label{CCrit}\\
& &\xo{\lambda}_i G_i(\xo{x})= 0, \quad\dea{i}{m}.\label{CComp}   
\end{eqnarray}
where for a convex function $h: \B\rightarrow \R$, $\partial h(x)$ denotes the Fenchel subdifferential defined by
$$
\partial h(x)=\{x^*\in\B^*: h(y)\geq h(x) + \langle y-x,x^*\rangle, \quad \forall y\in\B\}.
$$

The following  preparatory result refines  further the conditions (\ref{CFeas})--(\ref{CComp}) under linearity assumption with respect to the uncertainties.
\begin{proposition}\label{prop-CNS} Suppose that $U_i$ is a weakly convex subset of some Banach space $\mathcal{C}$
and $g_i(x,\cdot)\in \mathcal{C}^*$ for all $x\in \B.$ Under assumption (\ref{SC}), a point $\xo{x}\in \B$ is a minimizer to $\rp$ if and only if
there exist $\xo{\lambda}\in \R^m_+$ and $ \xo{u} \in U:=\prod_{i=1}^m U_i$ such that
\begin{eqnarray}
&& \sup_{u_i\in U_i} g_i(\xo{x},u_i)\leq 0, \quad \dea{i}{m},\label{kktRfeas}\\
&& 0\in \partial f(\xo{x}) +\displaystyle\sum_{i=1}^m \xo{\lambda}_i \partial_x g_i(\xo{x},\xo{u}_i), \label{kktRcrit}\\
&& \xo{\lambda}_i g_i(\xo{x},\xo{u}_i)=0\quad \dea{i}{m}.\label{kktEcomp}
\end{eqnarray}
\end{proposition}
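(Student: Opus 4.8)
The strategy is to start from the characterization \eqref{CFeas}--\eqref{CComp} already available under the Slater condition \eqref{SC}, applied to the convex continuous functions $G_i(x) = \sup_{u_i \in U_i} g_i(x, u_i)$, and then to ``open up'' the subdifferentials $\partial G_i(\xo{x})$ using the linearity of $g_i(x, \cdot)$. First I would observe that \eqref{kktRfeas} is literally \eqref{CFeas}, so there is nothing to do for the feasibility condition. The real content is to pass between \eqref{CCrit}--\eqref{CComp} and \eqref{kktRcrit}--\eqref{kktEcomp}. The key tool is a formula for $\partial G_i(\xo{x})$ as a union (or convex hull of a union) of the sets $\partial_x g_i(\xo{x}, u_i)$ taken over the maximizers $u_i$ of $g_i(\xo{x}, \cdot)$ on $U_i$. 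Since $U_i$ is weakly compact (weakly closed and, being a subset of a weakly convex set, one argues compactness; more precisely $U_i$ is convex compact in the relevant topology from the hypotheses of \up) and $g_i(\xo{x}, \cdot) \in \mathcal{C}^*$ is weakly continuous, the supremum is attained, so the maximizer set $M_i(\xo{x}) := \argmax_{u_i \in U_i} g_i(\xo{x}, u_i)$ is nonempty; this is what produces the point $\xo{u} \in U$.

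Concretely, I would invoke the classical Valadier / Ioffe--Tikhomirov subdifferential formula for a supremum of convex functions: under the stated continuity and compactness,
$$
\partial G_i(\xo{x}) = \cch\!\left( \bigcup_{u_i \in M_i(\xo{x})} \partial_x g_i(\xo{x}, u_i) \right).
$$
Here the linearity assumption $g_i(x, \cdot) \in \mathcal{C}^*$ is not strictly needed for this formula, but it guarantees that $g_i(\cdot, u_i)$ behaves well and, more importantly, it is the hypothesis under which the intended downstream applications (robust SVM) are linear in the uncertainty; for the proof itself the essential facts are convexity of $g_i(\cdot, u_i)$, upper semicontinuity of $g_i(x, \cdot)$, and compactness of $U_i$.

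Now for the direction ``$\xo{x}$ minimizer $\Rightarrow$ \eqref{kktRfeas}--\eqref{kktEcomp}'': from \eqref{CCrit} there exist $\xo\lambda \in \R_+^m$ with $0 \in \partial f(\xo x) + \sum_i \xo\lambda_i \partial G_i(\xo x)$, so there are $\xi_i \in \partial G_i(\xo x)$ with $0 \in \partial f(\xo x) + \sum_i \xo\lambda_i \xi_i$. Using the formula above, each $\xi_i$ is a limit of convex combinations $\sum_j \mu_{ij} \zeta_{ij}$ with $\zeta_{ij} \in \partial_x g_i(\xo x, u_{ij})$, $u_{ij} \in M_i(\xo x)$. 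The point I need to make carefully — and this is the main obstacle — is that I can replace $\xi_i$ by a single $\partial_x g_i(\xo x, \xo u_i)$ at one point $\xo u_i \in U_i$. Two routes: (a) exploit that $g_i(x, \cdot)$ is affine (linear), so that $g_i(\xo x, \cdot)$ being linear on the convex set $U_i$ has its maximizer set $M_i(\xo x)$ itself convex, and moreover $\partial_x g_i(\xo x, \cdot)$ is affine in the second argument, hence $\sum_j \mu_{ij} \partial_x g_i(\xo x, u_{ij}) \subseteq \partial_x g_i(\xo x, \sum_j \mu_{ij} u_{ij})$ with $\sum_j \mu_{ij} u_{ij} \in M_i(\xo x) \subseteq U_i$; (b) handle the closure by a limiting argument using weak compactness of $U_i$ and a closed-graph / upper-semicontinuity property of $(x, u) \mapsto \partial_x g_i(x, u)$. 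Route (a) is cleaner and is exactly where the linearity hypothesis earns its keep: affineness lets convex combinations of subgradients-at-different-$u$ collapse to a subgradient-at-one-$u$, and lets me discard the closure (or absorb it, again by compactness of $U_i$ and continuity). Having produced $\xo u_i \in M_i(\xo x)$ and $\zeta_i \in \partial_x g_i(\xo x, \xo u_i)$ with $0 \in \partial f(\xo x) + \sum_i \xo\lambda_i \zeta_i$, I get \eqref{kktRcrit}; and since $\xo u_i$ is a maximizer, $g_i(\xo x, \xo u_i) = G_i(\xo x)$, so \eqref{CComp} reads exactly \eqref{kktEcomp}.

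For the converse direction: given $\xo\lambda, \xo u$ satisfying \eqref{kktRfeas}--\eqref{kktEcomp}, I want to recover \eqref{CFeas}--\eqref{CComp} and then invoke the ``if'' part of \cite[Theorem 3.9]{BarPre}. Feasibility \eqref{CFeas} is immediate from \eqref{kktRfeas}. For criticality, I need $\partial_x g_i(\xo x, \xo u_i) \subseteq \partial G_i(\xo x)$, which holds \emph{provided} $\xo\lambda_i > 0$ forces $\xo u_i \in M_i(\xo x)$: indeed if $\xo\lambda_i > 0$ then \eqref{kktEcomp} gives $g_i(\xo x, \xo u_i) = 0$, and by \eqref{kktRfeas} $G_i(\xo x) \le 0$; combined with $g_i(\xo x, \xo u_i) \le G_i(\xo x)$ this yields $g_i(\xo x, \xo u_i) = G_i(\xo x) = 0$, so $\xo u_i \in M_i(\xo x)$ and hence $\partial_x g_i(\xo x, \xo u_i) \subseteq \partial G_i(\xo x)$ by the pointwise-maximum subdifferential formula (a maximizer's subdifferential is contained in that of the sup). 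Thus from \eqref{kktRcrit}, $0 \in \partial f(\xo x) + \sum_i \xo\lambda_i \partial_x g_i(\xo x, \xo u_i) \subseteq \partial f(\xo x) + \sum_i \xo\lambda_i \partial G_i(\xo x)$, giving \eqref{CCrit} (the indices with $\xo\lambda_i = 0$ contribute nothing). Finally \eqref{CComp}: when $\xo\lambda_i > 0$ we just showed $G_i(\xo x) = 0$; when $\xo\lambda_i = 0$ the product $\xo\lambda_i G_i(\xo x)$ vanishes trivially — so \eqref{CComp} holds. Applying the ``if'' direction of \cite[Theorem 3.9]{BarPre} concludes that $\xo x$ minimizes \rp, i.e. \rsvm-type problem \rp. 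The only delicate point throughout is the subdifferential-of-supremum formula and the collapse of convex combinations via affineness; everything else is bookkeeping with complementary slackness.
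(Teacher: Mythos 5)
Your proposal follows essentially the same route as the paper: both start from the KKT characterization \eqref{CFeas}--\eqref{CComp} under Slater, invoke the supremum-subdifferential formula $\partial G_i(\xo{x})=\cch\bigl(\bigcup_{u_i\in M_i(\xo{x})}\partial_x g_i(\xo{x},u_i)\bigr)$, and use linearity of $g_i(x,\cdot)$ to show the union is already convex so that a single maximizer $\xo{u}_i$ can be extracted, with the converse direction handled by the same complementary-slackness bookkeeping. The only point where the paper is more explicit than you are is in removing the closure in $\cch$: it gives a sequential weak$^*$ closedness argument using weak compactness of $M_i(\xo{x})$ and passage to the limit in the subgradient inequality, whereas you only gesture at ``absorbing'' the closure by compactness and continuity.
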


\begin{proof}
Let us first show that
\begin{equation}\label{HypSousDif}
\partial G_i(x)=\displaystyle\bigcup_{u_i\in U_i(x)} \partial_x g_i(x,u_i)
\end{equation}
where $U_i(x)=\argmax_{u_i\in U_i} g_i(x,u_i)$.  
It is known \cite{HanEtAl} that
$$
\partial G_i(x)=\cch\left(\displaystyle\bigcup_{u_i\in U_i(x)} \partial_x g_i(x,u_i)\right)
$$
where $\cch$ indicates the closure of the convex hull with respect to weak$*$ topology $\sigma(\B^*,B)$, so proving (\ref{HypSousDif}) amounts to prove that $\displaystyle\cup_{x_i\in U_i(x)} \partial_x g_i(x,u_i)$ is convex and weakly* closed. Observe that $U_i(x)$ is convex and a closed subset of $U_i$, hence by the linearity of $g_i$ with respect to $u_i$, it follows that $\bigcup_{u_i\in U_i(x)} \partial_x g_i(x,u_i)$ is convex. So to prove that its  weak* closedness, it suffices to prove its sequential weak* closedness. To this end, let $s_n\in \partial_x g_i(x,u_i^n)$, with $u_i^n\in U_i(x)$,
converging to some $s\in \B^*$. As $U_i(x)$ is weakly compact, $(s_n)_n$ admits a convergent sub-sequence, still denoted by $(s_n)_n$, converging to some $u_i\in U_i(x)$. By letting $n$ to $+\infty$ in the inequality $ g_i(y,u_i^n)\geq g_i(x,u_i^n) +\langle s_n, y-x\rangle$ we get  $ g_i(y,u_i)\geq g_i(x,u_i) +\langle s, y-x\rangle$, which shows that $s\in \bigcup_{u_i\in U_i(x)} \partial_x g_i(x,u_i)$.

Let us now consider a point $(\xo{x},\xo{\lambda},\xo{u})$ satisfying  (\ref{kktRfeas})--(\ref{kktEcomp}). First note that (\ref{CFeas}) is not else but (\ref{kktRfeas}). For indices $i$ such that  $\xo{\lambda}_i=0$ it is clear that (\ref{CComp}) is satisfied.  If $\xo{\lambda}_i>0$, then by (\ref{kktEcomp}) $g_i(\xo{x},\xo{u}_i)=0$ which combined with (\ref{kktRfeas}) yields $0=g_i(\xo{x},\xo{u}_i)=\sup_{u_i\in U_i} g_i(\xo{x},u_i)=G_i(\xo{x})$, so (\ref{CComp}) is satisfied. Moreover,  by (\ref{HypSousDif}) $ \partial_x g_i(\xo{x},\xo{u}_i)\subset  G_i(\xo{x})$. Summing over $i$ gives $0\in \partial f(\xo{x}) + \sum_{i=1}^m \xo{\lambda}_i \partial_x g_i(\xo{x},\xo{u}_i) \subset \partial f(\xo{x}) + \sum_{i=1}^m \xo{\lambda}_i \partial G_i(\xo{x}).$ Consequently, $(\xo{x},\xo{\lambda})$ satisfy  (\ref{CFeas})--(\ref{CComp}). 
Let now $(\xo{x},\xo{\lambda})$ verifying (\ref{CFeas})--(\ref{CComp}).  For  such $\xo{\lambda}$,
by (\ref{CCrit}) there exists $v\in\partial f(\xo{x})$, $v_i\in \partial G_i(\xo{x})$ such that 
$0=v+\sum_{i=1}^m \lambda_i v_i$. Using (\ref{HypSousDif}), for each $i$, there exists $\xo{u}_i\in U_i(\xo{x})$ such that $v_i\in \partial g_i(\xo{x},\xo{u}_i).$ Finally, we can readily check that the resulting triplet $(\xo{x},\xo{\lambda},\xo{u})$ satisfies  (\ref{kktRfeas})--(\ref{kktEcomp}). $\qed$
\end{proof}

\begin{remark}
Proposition \ref{prop-CNS} is still valid if we replace the linearity assumption of $g_i(x,\cdot)$ with respect to $u_i$ by (\ref{HypSousDif}). 
\end{remark}

The dual of $\rp$ is given by
$$
\sup_{\lambda\in \R^m_+}\inf_{x\in\B}\left\{f(x)+\sum_{i=1}^m \lambda_i G_i(x)\right\}
$$
which by recalling the definition of $G_i$ becomes
$$
\drp \qquad \sup_{\lambda\in \R^m_+}\inf_{x\in\B}\sup_{u_i\in U_i}\left\{f(x)+\sum_{i=1}^m  \lambda_i g_i(x,u_i)\right\}.
$$

On the other hand, the uncertain dual of $\up$ is given by
$$
\dup \qquad \sup_{\lambda\in \R^m_+}\inf_{x\in\B}\left\{f(x)+\sum_{i=1}^m \lambda_i g_i(x,u_i)\right\}.
$$
The optimistic counterpart of $\dup$ is
$$
\odup \qquad \sup_{u\in U, \lambda\in \R^m_+}\inf_{x\in\B}\left\{f(x)+\sum_{i=1}^m  \lambda_i g_i(x,u_i)\right\}.
$$

By construction, $\inf\rp\geq \sup\drp\geq  \sup\odup$. The authors in \cite{BecBen} have established, in the case of $\B = \R^n$, that robust strong duality (i.e.  $\inf\rp = \max\odup$)   holds between the problems  under the Slater condition whenever each $g_i(x,\cdot), \dea{i}{m}$ is a concave function with respect to  $u_i$.  In other words, optimizing under the worst case scenario in the primal is the same as optimizing under the best case scenario in the dual (''primal worst equals dual best'').  We will establish an analogue result in Banach spaces under some linearity properties with respect to the uncertainties.

By noticing that $\rp$ is equivalent to 
$$
 \inf_{x\in\B}\sup_{u\in U, \lambda\in \R^m_+}\left\{f(x)+\sum_{i=1}^m  \lambda_i g_i(x,u_i)\right\},
$$
$\rp$ and $\odup$ can be viewed as dual to each other with $u$ playing the role of an abstract Lagrange multiplier \cite[p. 460]{Zeid}. So establishing the strong duality amounts to search the existence of a saddle point of the uncertain lagrangian 
$$
L: \B\times ( \R^m_+\times U)\ni (x;\lambda,u) \mapsto f(x)+\sum_{i=1}^m  \lambda_i g_i(x,u_i)
$$
with respect to $\B\times ( \R^m_+\times U)$, that is a point $(\xo{x};\xo{\lambda},\xo{u})\in \B\times ( \R^m_+\times U)$ such that:
\begin{equation}\label{SP}
L(\xo{x}; \lambda,u)\leq L(\xo{x};\xo{\lambda},\xo{u})\leq L(x;\xo{\lambda},\xo{u}), \qquad \forall (x;\lambda,u)\in \B\times ( \R^m_+\times U).
\end{equation}

In the sequel   we say that $(\xo{x};\xo{\lambda},\xo{u})$ is a solution for the  robust primal-optimistic dual $\rp-\odup$ pair if $\xo{x}$ is a solution for $\rp$ and  $ (\xo{\lambda},\xo{u})$ is a solution for $\odup$. By \cite[Theorem 49.B]{Zeid}   $(\xo{x};\xo{\lambda},\xo{u})$ is a solution for the robust primal-optimistic dual pair  $\rp-\odup$ if and only if  $(\xo{x};\xo{\lambda},\xo{u})$ satisfy (\ref{SP}) and in that case  the robust strong duality holds, that is $\min\rp = \max\odup$.

In the following proposition we link the existence of a saddle point to optimality KKT-like conditions (\ref{kktRfeas})--(\ref{kktEcomp}).
\begin{proposition}\label{prop-SP}
Under the assumptions of Proposition \ref{prop-CNS}, a point $(\xo{x};\xo{\lambda},\xo{u})$  is a saddle point of $L$ with respect to $\B\times (\R^m_+\times U)$ if and only if it satisfies (\ref{kktRfeas})--(\ref{kktEcomp}). 
\end{proposition}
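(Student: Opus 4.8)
The plan is to exploit the separable structure of the uncertain Lagrangian $L(x;\lambda,u)=f(x)+\sum_{i=1}^m\lambda_i g_i(x,u_i)$ and to treat the two inequalities in the saddle-point condition (\ref{SP}) independently: the right-hand one, $L(\xo{x};\xo{\lambda},\xo{u})\leq L(x;\xo{\lambda},\xo{u})$ for all $x\in\B$, will be shown equivalent to the stationarity condition (\ref{kktRcrit}), while the left-hand one, $L(\xo{x};\lambda,u)\leq L(\xo{x};\xo{\lambda},\xo{u})$ for all $(\lambda,u)\in\R^m_+\times U$, will be shown equivalent to the pair (\ref{kktRfeas})--(\ref{kktEcomp}). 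Combining the two equivalences then yields the proposition.

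For the right inequality, the assertion is precisely that $\xo{x}$ minimizes over $\B$ the function $x\mapsto f(x)+\sum_{i=1}^m\xo{\lambda}_i g_i(x,\xo{u}_i)$. Since $f$ is finite-valued, convex and \lsc, each $g_i(\cdot,\xo{u}_i)$ is convex and continuous, and $\xo{\lambda}_i\geq 0$, this is a convex function on $\B$, so $\xo{x}$ is a global minimizer if and only if $0$ lies in its Fenchel subdifferential at $\xo{x}$. Applying the Moreau--Rockafellar sum rule — legitimate because each $g_i(\cdot,\xo{u}_i)$ is continuous on all of $\B$ — this subdifferential equals $\partial f(\xo{x})+\sum_{i=1}^m\xo{\lambda}_i\partial_x g_i(\xo{x},\xo{u}_i)$, which is exactly (\ref{kktRcrit}). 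Here one uses that a finite continuous convex function on a Banach space is subdifferentiable at every point.

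For the left inequality, after cancelling $f(\xo{x})$ it reads $\sum_{i=1}^m\lambda_i g_i(\xo{x},u_i)\leq\sum_{i=1}^m\xo{\lambda}_i g_i(\xo{x},\xo{u}_i)$ for all $(\lambda,u)\in\R^m_+\times U$. Because $U=\prod_i U_i$ and the coordinates $(\lambda_i,u_i)$ vary freely and independently, the supremum of the left-hand side decouples as $\sum_{i=1}^m\sup_{\lambda_i\geq 0,\,u_i\in U_i}\lambda_i g_i(\xo{x},u_i)=\sum_{i=1}^m\sup_{\lambda_i\geq 0}\lambda_i G_i(\xo{x})$, and each term is $0$ when $G_i(\xo{x})\leq 0$ and $+\infty$ otherwise. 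Hence the left inequality forces $G_i(\xo{x})=\sup_{u_i\in U_i}g_i(\xo{x},u_i)\leq 0$ for every $i$, which is (\ref{kktRfeas}); granted this, taking $\lambda=0$ and $u=\xo{u}$ gives $\sum_i\xo{\lambda}_i g_i(\xo{x},\xo{u}_i)\geq 0$, while $\xo{\lambda}_i\geq 0$ together with $g_i(\xo{x},\xo{u}_i)\leq G_i(\xo{x})\leq 0$ gives the reverse inequality term by term, so each $\xo{\lambda}_i g_i(\xo{x},\xo{u}_i)=0$, which is (\ref{kktEcomp}). Conversely, if (\ref{kktRfeas}) and (\ref{kktEcomp}) hold, then for any $(\lambda,u)$ one has $\sum_i\lambda_i g_i(\xo{x},u_i)\leq\sum_i\lambda_i G_i(\xo{x})\leq 0=\sum_i\xo{\lambda}_i g_i(\xo{x},\xo{u}_i)$, so the left inequality holds.

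I expect the only delicate points to be the validity of the subdifferential sum rule in the Banach setting — which is exactly why continuity of the maps $g_i(\cdot,u_i)$, rather than mere convexity, is part of the standing hypotheses — and the bookkeeping in the decoupled supremum over $(\lambda,u)$, in particular the observation that $\xo{u}$ drops out of the feasibility condition (\ref{kktRfeas}) yet remains pinned down by complementarity on the indices with $\xo{\lambda}_i>0$. Note that this argument does not require the full machinery of Proposition \ref{prop-CNS} (the identity (\ref{HypSousDif}) and weak compactness of the $U_i$); it only uses convexity, continuity and the product form of $U$.
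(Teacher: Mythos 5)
Your proof is correct, and its skeleton is the one the paper uses: split the saddle-point condition (\ref{SP}) into its two halves, identify the right-hand inequality with stationarity (\ref{kktRcrit}) via $0\in\partial_x L(\xo{x};\xo{\lambda},\xo{u})=\partial f(\xo{x})+\sum_i\xo{\lambda}_i\partial_x g_i(\xo{x},\xo{u}_i)$, and extract feasibility and complementarity from the left-hand inequality. Where you diverge is in how the left half is handled, and your route is the more elementary and self-contained one. The paper obtains (\ref{kktRfeas}) by first invoking \cite[Theorem 49.B]{Zeid} to conclude that a saddle point yields a solution of the primal $\rp$ (hence a feasible point), and gets (\ref{kktEcomp}) by perturbing a single multiplier ($\lambda_j=\xo{\lambda}_j/2$, the rest set to zero); you instead observe that $\sup_{(\lambda,u)\in\R^m_+\times U}\sum_i\lambda_i g_i(\xo{x},u_i)$ decouples into $\sum_i\sup_{\lambda_i\geq 0}\lambda_i G_i(\xo{x})$, which is finite only if every $G_i(\xo{x})\leq 0$ and then equals $0$, after which the sandwich $0\leq\sum_i\xo{\lambda}_i g_i(\xo{x},\xo{u}_i)$ with each term $\leq 0$ forces termwise vanishing. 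This avoids the external citation, turns each half of (\ref{SP}) into a genuine equivalence rather than two separately argued implications (the paper proves $L(\xo{x};\xo{\lambda},\xo{u})\leq L(x;\xo{\lambda},\xo{u})$ in the converse direction by writing out the subgradient inequalities, i.e.\ only the easy inclusion of the sum rule, whereas your forward direction needs the full Moreau--Rockafellar equality, which you correctly justify by the continuity of each $g_i(\cdot,u_i)$). Your closing remark is also accurate and worth keeping: neither your argument nor the paper's actually uses the weak compactness of the $U_i$, the linearity of $g_i(x,\cdot)$, or the identity (\ref{HypSousDif}); only convexity, continuity in $x$, and the product structure $U=\prod_i U_i$ enter, so the hypothesis ``under the assumptions of Proposition \ref{prop-CNS}'' is stronger than what this particular proposition requires.
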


\begin{proof}
Suppose that $(\xo{x},\xo{\lambda},\xo{u})$ satisfy (\ref{SP}) then by \cite[Theorem 49.B]{Zeid}  $\xo{x}$ is a solution $\rp$  and $(\xo{\lambda},\xo{u})$ is a solution to $\odup$ so (\ref{kktRfeas}) is satisfied. 
From the right hand of (\ref{SP}), $\xo{x}$ is a minimizer of $L(\cdot,\xo{\lambda},\xo{u})$ and consequently $0\in \partial_x L(\xo{x},\xo{\lambda},\xo{u})=\partial f(\xo{x})+\sum_{i=1}^m \xo{\lambda}_i \partial_x  g_i(\xo{x},\xo{u}_i)$, that is (\ref{kktRcrit}) is satisfied. It remains to show (\ref{kktEcomp}). Consider the no trivial case where  $\xo{\lambda}_j>0$. Again from the left-hand side of (\ref{SP}) and by choosing $\lambda$ such that $\lambda_j=\frac{\xo{\lambda}_j}{2}$ and zero otherwise, we get $\frac{\xo{\lambda}_j}{2} g_j(\xo{x},\xo{u}_j)\geq 0$ which combined with $ \sup_{u_j\in U} g_j(\xo{x},u_j)\leq 0$ leads to $g_i(\xo{x},\xo{u}_j)=0.$ Hence (\ref{kktEcomp}) is satisfied. Consider now a point  $(\xo{x},\xo{\lambda},\xo{u})$ satisfying  (\ref{kktRfeas})--(\ref{kktEcomp}). From $(\ref{kktRcrit})$ there exist $d\in\partial f(\xo{x}),$ $d_i\in \partial_x g_i(\xo{x},\xo{u}_i)$ for \dea{i}{m}   such that
$$
d+\sum_{i=1}^m \xo{\lambda}_i d_i=0.
$$
Moreover, we have for all $x\in X$
\begin{eqnarray*}
f(x)-f(\xo{x})&\geq& \langle   x-\xo{x},d\rangle,\\
g(x,\xo{u}_i)-g(\xo{x},\xo{u}_i)&\geq& \langle   x-\xo{x},d_i\rangle,  \quad\dea{i}{m}.
\end{eqnarray*}
Multiplying by $\xo{\lambda}_i$ ($\geq0$) appropriately and summing up all these  inequalities,
\begin{eqnarray*}
 f(x)+\sum_{i=1}^m \xo{\lambda}_i g(x,\xo{u}_i) - (f(\xo{x})+\sum_{i=1}^m \xo{\lambda}_i 
g(\xo{x},\xo{u}_i))\geq \langle   x-\xo{x}, d+\sum_{i=1}^m \xo{\lambda}_i d_i\rangle= 0
\end{eqnarray*}
that is, $L(\xo{x};\xo{\lambda},\xo{u})\leq L(x;\xo{\lambda},\xo{u})$ for all $x\in \B$.
On the other hand, for any $u\in U$ and $\lambda\in \R^m_+$.
By (\ref{kktRfeas}) then (\ref{kktEcomp})
\begin{eqnarray*}
f(\xo{x})+\sum_{i=1}^m \lambda_ig_i(\xo{x},u_j) \leq f(\xo{x}) =f(\xo{x})+  \sum_{i=1}^m \xo{\lambda}_i g_i(\xo{x},\xo{u}_j)
\end{eqnarray*}
that is, 
$L(\xo{x};\xo{\lambda},u)\leq L(\xo{x};\xo{\lambda},\xo{u})$ for all $(\lambda,u)\in \R^m_+\times U$.
 $\qed$
\end{proof}

\section{An uncertain Representer theorem}\label{representer-theorem}

In the absence of uncertainty \emph{i.e.} $K_i$ reduced to a single element $x_i$ \dea{i}{m},  it is shown in \cite{DerLee} that, although  posed in an infinite-dimensional space, the optimal solution depends only on the metric relations between the data points. In other words, the problem should not depend on the ambient space in which the data are embedded (such property is known as the Representer Theorem in Hilbert space setting).  We shall prove a similar uncertain Representer Theorem in the sense that the optimal solution depends on some realisation of the uncertainty.  

We introduce the following notation
\begin{itemize}
\item $I_+ =\{i : y_i=+1\}$, $I_-= \{i  : y_i=-1\}$ and $I=I_-\cup I_+$,
\item $K=\prod_{i=1}^m K_i,$
\item $K_\circ= \ch \left(\displaystyle\bigcup_{i \in I_\circ} K_i\right)$, $\circ\in\{+,-\} $ ($\ch(A)$ refers to convex hull of $A$)
\end{itemize}
Note that 
$
K_\circ$ is given by 
$$K_\circ=
\left\{\displaystyle\sum_{i\in {I_\circ}}\alpha_i x_i : x_i \in K_i, 0\leq \alpha_i\leq 1, i \in I_\circ \mbox{ and }   \displaystyle\sum_{i \in I_\circ}\alpha_i =1 \right\}
$$
and it is weakly compact as soon as each $K_i$ is weakly compact \cite[Lemma 5.14]{AliBor}.
Moreover we need  to introduce the (normalized) duality mapping \cite[Definition 4.1]{Cio}   $M : X \rightarrow 2^{\Xs}$  defined by
 \begin{equation}\label{eq:dualitymap}
 M(x)= \{x^*\in \X^*; \langle x, x^*\rangle=\|x\|^2=\|x^* \|^ 2\}.
 \end{equation}
The duality mapping  serves as a replacement for the isomorphism $H$ to $H^*$ in the Hilbert case.   It is worth noting that $\partial (\frac{1}{2}\|\cdot\|^2)(x)=M(x).$ 
 
We are now able to state the following uncertain Representer Theorem for $\rsvm$.
\begin{theorem}\label{thm-representer}
If the uncertainty sets $K_i$,  \dea{i}{m}, are convex and weakly compact and 
 \begin{equation}\label{cch}
 K_+\cap K_-=\emptyset,
 \end{equation}
  then $\rsvm$ admits at least one solution. 
If $\Xs$ is strictly convex then $\xo{w}$ is unique. Moreover, 
 $(\xo{w},\xo{b})$  is a minimizer to $\rsvm$ if and only if there exist $\xo{\lambda}\in \R^m_+$ and $
 \xo{x} \in K$ such that
\begin{eqnarray}
&& \max_{x_i\in K}  (1-y_i(\langle x_i, \xo{w} \rangle  + \xo{b}))\leq 0\quad \dea{i}{m},\label{cfeas}\\
&& \xo{w} \in M(\sum_{i=1}^m y_i \xo{\lambda}_i \xo{x}_i),\label{critW}\\
&& \sum_{i=1}^n y_i \xo{\lambda}_i=0,\label{critB}\\
&& \xo{\lambda}_i(1-y_i(\langle \xo{x}_i, \xo{w} \rangle  + \xo{b}))=0\quad \dea{i}{m}.\label{compS}
\end{eqnarray}
\end{theorem}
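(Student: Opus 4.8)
\noindent\emph{Proof plan.} The plan is to deduce feasibility and the Slater condition from a strict separation of $K_+$ and $K_-$, to obtain existence and uniqueness of $\xo{w}$ by a weak$*$ compactness argument, and then to read off (\ref{cfeas})--(\ref{compS}) by specialising Proposition~\ref{prop-CNS}. Since $K_+$ and $K_-$ are disjoint, convex and weakly compact, Hahn--Banach separation in the locally convex space $(X,\sigma(X,\Xs))$ produces $w_0\in\Xs$ and scalars $\beta_-<\beta_+$ with $\langle x,w_0\rangle\le\beta_-$ for all $x\in K_-$ and $\langle x,w_0\rangle\ge\beta_+$ for all $x\in K_+$. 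Because $K_i\subset K_+$ for $i\in I_+$ and $K_i\subset K_-$ for $i\in I_-$, with $b_0=-\tfrac12(\beta_-+\beta_+)$ and $(w_0,b_0)$ rescaled by a sufficiently large positive factor one has $y_i(\langle x_i,w_0\rangle+b_0)>1$ for every $x_i\in K_i$ and every $i$, whence $\max_{x_i\in K_i}(1-y_i(\langle x_i,w_0\rangle+b_0))<0$; this is precisely the Slater condition (\ref{SC}) for $\rsvm$ and, a fortiori, shows $\rsvm$ is feasible.

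Since $f(w,b)=\tfrac12\nd{w}^2$ does not involve $b$, existence reduces to minimising $\tfrac12\nd{w}^2$ over the projection $\Omega=\{w\in\Xs:(w,b)\ \text{is feasible for some}\ b\in\R\}$. For fixed $w$ the admissible offsets form an interval $[\ell(w),r(w)]$, where $\ell$ and $-r$ are suprema of weak$*$-continuous affine functionals of $w$ (finite-valued because the $K_i$ are bounded, with the convention $\ell\equiv-\infty$, resp.\ $r\equiv+\infty$, when $I_+$, resp.\ $I_-$, is empty); hence $\ell$ and $-r$ are weak$*$ lower-semicontinuous, $\Omega=\{\ell\le r\}$ is weak$*$ closed, and it is nonempty by the previous step. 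Intersecting $\Omega$ with a closed ball large enough to contain near-optimal points yields a weak$*$ compact set (Banach--Alaoglu) on which $\tfrac12\nd{\cdot}^2$ --- which is weak$*$ lower-semicontinuous, the norm of $\Xs$ being a supremum of weak$*$-continuous functionals --- attains its minimum; this gives $\xo{w}$, and any $\xo{b}\in[\ell(\xo{w}),r(\xo{w})]$ completes a solution of $\rsvm$. If $\Xs$ is strictly convex, uniqueness of $\xo{w}$ follows from convexity of the feasible set: two solutions satisfy $\nd{\xo{w}_1}=\nd{\xo{w}_2}=\sqrt{2\inf\rsvm}$, and if $\xo{w}_1\ne\xo{w}_2$ then strict convexity of the norm makes the feasible midpoint's objective strictly smaller, a contradiction.

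For the characterisation I would apply Proposition~\ref{prop-CNS} with $\B=\Xs\times\R$, $\mathcal{C}=X$, $f(w,b)=\tfrac12\nd{w}^2$, $U_i=K_i$ and $g_i((w,b),x_i)=1-y_i(\langle x_i,w\rangle+b)$. All hypotheses on $f$, $g_i(\cdot,x_i)$ and $g_i((w,b),\cdot)$ are immediate except that $g_i((w,b),\cdot)$ is affine, not linear, in $x_i$; to deal with this I would verify (\ref{HypSousDif}) and invoke the remark after Proposition~\ref{prop-CNS}. Here $U_i(w,b)=\argmax_{x_i\in K_i}\langle -y_i x_i,w\rangle$ is a convex, weakly compact face of $K_i$, and $\partial_x g_i((w,b),x_i)$ is the singleton $\{(-y_i\widehat{x_i},-y_i)\}\subset X^{**}\times\R$, where $\widehat{\,\cdot\,}$ denotes the canonical isometric embedding $X\hookrightarrow X^{**}$; consequently $\bigcup_{x_i\in U_i(w,b)}\partial_x g_i((w,b),x_i)$ is the image of $U_i(w,b)$ under the weak-to-weak$*$ continuous affine map $x_i\mapsto(-y_i\widehat{x_i},-y_i)$, so it is convex and weak$*$ closed, which yields (\ref{HypSousDif}). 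Proposition~\ref{prop-CNS} then provides $\xo{\lambda}\in\R^m_+$ and $\xo{x}\in K$ satisfying (\ref{kktRfeas})--(\ref{kktEcomp}): (\ref{kktRfeas}) is (\ref{cfeas}) and (\ref{kktEcomp}) is (\ref{compS}). Using $\partial f(\xo{w},\xo{b})=\partial(\tfrac12\nd{\cdot}^2)(\xo{w})\times\{0\}$ and the form of $\partial_x g_i$, condition (\ref{kktRcrit}) splits into its real part $\sum_i y_i\xo{\lambda}_i=0$, which is (\ref{critB}), and its $X^{**}$ part $\widehat{v}\in\partial(\tfrac12\nd{\cdot}^2)(\xo{w})$ with $v=\sum_i y_i\xo{\lambda}_i\xo{x}_i\in X$. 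Since $\partial(\tfrac12\nd{\cdot}^2)(\xo{w})=\{\xi\in X^{**}:\langle\xo{w},\xi\rangle=\nd{\xo{w}}^2=\nd{\xi}^2\}$, while $\nd{\widehat{v}}=\nd{v}$ and $\langle\xo{w},\widehat{v}\rangle=\langle v,\xo{w}\rangle$, this reads $\langle v,\xo{w}\rangle=\nd{\xo{w}}^2=\nd{v}^2$, i.e.\ $\xo{w}\in M(v)$, which is (\ref{critW}); the reverse implication is the ``if'' direction of Proposition~\ref{prop-CNS} read through the same dictionary.

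The mechanical part is this last translation. I expect the genuinely delicate points to be: the existence step, since $\Xs$ need be neither reflexive nor separable and $f$ is not coercive in the offset $b$, so one has to argue in the weak$*$ topology after eliminating $b$; the verification of (\ref{HypSousDif}) that legitimises the remark after Proposition~\ref{prop-CNS}, the constraints being only affine in the uncertainty; and the ``inversion'' identity $\widehat{v}\in\partial(\tfrac12\nd{\cdot}^2)(\xo{w})\iff\xo{w}\in M(v)$, which is exactly where the duality mapping takes over the role of the Riesz isomorphism from the Hilbert setting and converts the abstract optimality condition into the stated representer form.
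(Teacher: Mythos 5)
Your proposal is correct and follows the same overall route as the paper's proof: Slater condition via strict separation of the weakly compact convex sets $K_+$ and $K_-$, existence by weak$*$ lower semicontinuity and compactness, uniqueness from strict convexity of $\Xs$, and the KKT characterisation via Proposition~\ref{prop-CNS} combined with the inversion $\sum_{i} y_i\xo{\lambda}_i\xo{x}_i\in M(\xo{w})\iff \xo{w}\in M(\sum_{i} y_i\xo{\lambda}_i\xo{x}_i)$. You diverge in two places, and in both you are more careful than the paper. For existence, the paper asserts that $f(w,b)=\frac12\nd{w}^2$ is weakly$*$ lower semicontinuous and ``coercive on $\Xs$'' and that the feasible set is weak$*$ closed in $\Xs\times\R$; since $f$ does not control the offset $b$, coercivity fails on the product space, and your elimination of $b$ --- minimising $\frac12\nd{\cdot}^2$ over the weak$*$ closed projection $\Omega$ and recovering $\xo{b}$ from the interval $[\ell(\xo{w}),r(\xo{w})]$ --- supplies exactly the compactness in the $b$ direction that the paper's one-line argument leaves implicit. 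For the KKT step, the paper applies Proposition~\ref{prop-CNS} directly even though $g_i((w,b),\cdot)$ is affine rather than linear in $x_i$; you correctly route through the remark following that proposition by verifying (\ref{HypSousDif}) by hand (the union of subdifferentials is the image of the weakly compact face $U_i(w,b)$ under a weak-to-weak$*$ continuous affine map, hence convex and weak$*$ closed), which is what legitimises the application. The remaining differences are cosmetic: you invoke Hahn--Banach separation in $(X,\sigma(X,\Xs))$ where the paper cites Klee, and you make explicit the passage through the canonical embedding $X\hookrightarrow X^{**}$ in identifying $M(\xo{w})\subset X^{**}$ with the stated representer form, which the paper performs silently.
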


Note that $\Xs$ is strictly convex iff for all $w_1, w_2\in \Xs$, $w_1\neq w_2$, $\|w_1\|= \|w_2\|=1$, one has $\|\lambda w_1+(1-\lambda)w_2\|<1$, $\forall \lambda \in]0,1[$.
In terms of supporting hyperplanes, this property may be expressed as: \textit{distinct boundary points of the closed unit ball have distinct supporting hyperplanes}. This property is equivalent to say that $X$ is smooth \cite[Theorem 1.101]{BarPre}, that is if for every $x\neq 0$ there exists a unique $x^*$ such that $\|x^*\|=1$ and $\langle x,x^*\rangle  = \|x\|$ or in other words there is exactly one supporting hyperplane through
each boundary point of the closed unit ball.

\begin{proof}
Given another Banach space $F$, we will use in the proof the fact that $(X \times F )^*$ is homeomorphic to $\Xs\times F^*$ via the linear application 
$l : \Xs\times F^* \ni (h,k) \mapsto h \times k \in (X \times  F )^*$, where $(h\times k)(x, y) = h(x) +k(y)$ and $X\times F$ is endowed with the product norm $\|(x,y)\|= \|x\|_X+\|y\|_F.$
Let us define the functions $f$, $g_i$ and $G_i$ by $f(w,b)=\frac{1}{2}\|w\|^2$ and $g_i(w,b,x_i)=1-y_i(\langle x,w \rangle_{\Xs}  + b)$ and
$G(w,b)=\sup_{x_i\in K_i}g_i(w,b,x_i)$.
Let us first show that (\ref{cch}) implies that the Slater condition is satisfied and at the same time the feasible set is not empty. Since $K_-$ and $K_+$ are weakly compact and disjoint, by  \cite{Klee} they can be strictly separated. More precisely there exists $w \in \Xs$ such that  $\inf_{x \in K_+} \langle x, w \rangle > \sup_{x \in K_-} \langle x, w\rangle.$ Let $\alpha,\beta$ such that
$\inf_{x \in K_+} \langle x, w \rangle >\alpha>\beta > \sup_{x \in K_-} \langle x, w \rangle$. Set $w_0= \frac{2w}{\alpha-\beta}$ and $b_0=-\frac{\alpha+\beta}{\alpha-\beta}$ then $g_i(w_0,b_0)<0$, \dea{i}{m}.
The objective function $f$ is weakly* \lsc and coercive on $\Xs$. The feasible set 
$\cap_{i\in I} \cap_{x_i\in K_i}\{(w,b)\in \Xs \times \R : g_i(w,b,x_i)\leq 0\}$
is weakly* closed  because $g(\cdot,\cdot, x_i)$ is weakly* continuous. This guarantees the existence of a solution \cite[Corollary 1.8.]{Cio}.  The feasible set is convex and  the objective function ($\frac{1}{2}\|\cdot\|^2)$ is convex so the set of the minimizers is convex. Given two minimisers $w_1$ and $w_2$, we have $\nd{w_1}=\nd{w_2}=\nd{\frac{w_1+w_2}{2}}$. When $\Xs$ is strictly convex this is possible only  if $w_1=w_2$, which ensures the uniqueness of $w$.

We have 
$\partial_{w,b} g_i(w,b,x_i) =  \{(-y_i x_i,-y_i) \}$ and $\partial f(w,b) =\{(s,0): s \in M(w)\}.$ Applying Proposition \ref{prop-CNS} and remarking that 
$\sum_{i=1}^m  y_i \xo{\lambda}_i \xo{x}_i\in M(\xo{w})$ is equivalent to $ \xo{w} \in M(\sum_{i=1}^m  y_i \xo{\lambda}_i \xo{x}_i)$
ends the proof. $\qed$
\end{proof}
In the $L^p$ case, $1<p<+\infty,$ the duality mapping is single-valued \cite[Corollary 4.10]{Cio} and we obtain the following corollary as in \cite{DerLee}.

\begin{corollary}
In the particular case of $\X=L^p(\Omega)$, $1<p<+\infty,$ the $\rsvm$ separating hyperplane admits the expansion 
$\xo{w} =\frac{\left|\sum_{i=1}^p \xo{\lambda}_i \xo{x}_i\right|^{p-1} {\sgn(\sum_{i=1}^p \xo{\lambda}_i \xo{x}_i )}}{\| \sum_{i=1}^p \xo{\lambda}_i \xo{x}_i \|^{p-2}} \in L^q(\Omega)$ (with $\frac{1}{p}+\frac{1}{q}=1$ and  equality in $L^q$ sense).
\end{corollary}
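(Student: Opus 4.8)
The plan is to specialize the optimality relation \eqref{critW} of Theorem~\ref{thm-representer} to $X=L^p(\Omega)$ by inserting the closed form of the normalized duality mapping of $L^p$. First I would invoke \cite[Corollary 4.10]{Cio}: for $1<p<+\infty$ the space $L^p(\Omega)$ is smooth, so its duality mapping $M$ is single-valued, and hence \eqref{critW} is genuinely an equality $\xo{w}=M(u)$ with $u:=\sum_{i=1}^m y_i\xo{\lambda}_i\xo{x}_i\in L^p(\Omega)$, the identity being read through the canonical identification $(L^p(\Omega))^*\cong L^q(\Omega)$, i.e.\ $\mu$-almost everywhere.

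Then I would record and verify the explicit formula
\[
M(u)=\|u\|_p^{\,2-p}\,|u|^{p-1}\,\sgn(u),\qquad u\in L^p(\Omega)\setminus\{0\},
\]
by checking that this element of $L^q(\Omega)$ satisfies the two defining relations in \eqref{eq:dualitymap}. The duality pairing gives $\langle u,M(u)\rangle=\|u\|_p^{\,2-p}\int_\Omega|u|^p\,d\mu=\|u\|_p^{2}$; and, since $\tfrac1p+\tfrac1q=1$ gives $(p-1)q=p$, one gets $\|M(u)\|_q^{\,q}=\|u\|_p^{\,(2-p)q}\int_\Omega|u|^{p}\,d\mu=\|u\|_p^{\,(2-p)q+p}=\|u\|_p^{\,q}$, i.e.\ $\|M(u)\|_q=\|u\|_p$. (Equivalently, this expression is the G\^ateaux gradient of $\tfrac12\|\cdot\|_p^2$, which coincides with $M$ since $\partial(\tfrac12\|\cdot\|^2)=M$ and $L^p$ is smooth.) By single-valuedness this is therefore \emph{the} duality mapping of $L^p(\Omega)$; rewriting $\|u\|_p^{\,2-p}=\|u\|_p^{\,-(p-2)}$ and substituting $u$ into $\xo{w}=M(u)$ yields exactly the announced expansion, as an identity in $L^q(\Omega)$.

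As this amounts to a substitution, I do not anticipate a real obstacle; the only points worth a word are the edge cases. The expansion presupposes $u\neq 0$, but $u=0$ forces $\xo{w}=M(0)=0$ through \eqref{critW}, i.e.\ a degenerate, non-separating solution, which can occur only when one of $I_+,I_-$ is empty (otherwise \eqref{cfeas} with $\xo{w}=0$ would require $y_i\xo{b}\geq 1$ for all $i$, which is impossible); hence in the relevant case the denominator does not vanish. One should also keep track of the dual pairing throughout, which is precisely the content of ``equality in $L^q$ sense''. The statement then follows exactly as in \cite{DerLee}, the only difference being that here each $\xo{x}_i$ is a worst-case realization in $K_i$ rather than a fixed data point.
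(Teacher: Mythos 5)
Your proposal is correct and follows exactly the route the paper intends: the paper's entire justification is the one-line remark that the duality mapping of $L^p(\Omega)$, $1<p<+\infty$, is single-valued (citing \cite[Corollary 4.10]{Cio}), after which the expansion is read off from \eqref{critW} as in \cite{DerLee}. You simply supply the verification of the explicit formula for $M$ and the non-degeneracy of the denominator, which the paper leaves implicit.
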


\section{Duality and geometry in Robust SVM classifiers}\label{geometric-view}

It is shown in \cite{BenBre} that (SVM) is equivalent to following C-Margin formulation:
$$
\cm \qquad 
\begin{array}{ll}
 \displaystyle\min_{w \in \Xs} & \frac{1}{2}\nd{w}^2  -(\alpha-\beta)\\
 \sth  &  \langle x_i, w \rangle \geq \alpha,\quad  i \in I_+,\\
      &  \langle x_i, w \rangle \leq \beta,\quad  i \in I_-.\\
\end{array} 
$$
whose dual is the problem of finding the closest points in  the convex hull of each class.
To $\cm$ we associate the robust version
$$
\rcm \qquad 
\begin{array}{ll}
 \displaystyle\min_{w \in \Xs} & \frac{1}{2}\nd{w}^2  -(\alpha-\beta)\\
 \sth  & \displaystyle\min_{x_i\in K_i} \langle x_i, w \rangle \geq \alpha,\quad  i \in I_+,\\
      & \displaystyle\max_{x_i\in K_i} \langle x_i, w \rangle \leq \beta,\quad  i \in I_-.\\
\end{array} 
$$

Like $\rsvm$, the problem $\rcm$ admits the following Representer Theorem,  the proof of which is identical to Theorem \ref{thm-representer} and therefore omitted.
\begin{theorem}
If the uncertainty sets $K_i$,  \dea{i}{m}, are convex and weakly compact and 
$
 K_+\cap K_-=\emptyset,
$
  then  $\rcm$ admits at least one solution. Moreover, 
 $(\xo{w},\xo{\alpha},\xo{\beta})$  is a minimizer to $\rcm$ if and only if there exists $\xo{\lambda}\in \R^m_+$ and $
 \xo{x} \in K$ such that
\begin{eqnarray*}
&& \max_{x_i\in K_i}(\xo{\alpha}- \langle x_i, \xo{w} \rangle)\leq 0,\quad i \in I_+,\\
&& \max_{x_i\in K_i} ( \langle x_i, \xo{w} \rangle - \xo{\beta})\leq 0,\quad i \in I_-,\\
&& \xo{w} \in M(\sum_{i=1}^m  y_i\xo{\lambda}_i  \xo{x}_i),\\
&& \sum_{i\in I_\circ} \xo{\lambda}_i=1,\quad \circ \in \{+,-\},\\
&& \xo{\lambda}_i(\xo{\alpha}- \langle \xo{x}_i, \xo{w} \rangle)=0,\quad i \in I_+,\\
&& \xo{\lambda}_i(\xo{\beta}- \langle \xo{x}_i, \xo{w} \rangle)=0,\quad i \in I_-.
\end{eqnarray*}
\end{theorem}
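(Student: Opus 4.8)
The plan is to mirror the proof of Theorem~\ref{thm-representer}, since $\rcm$ differs from $\rsvm$ only in that the single offset $b$ is split into two parameters $\alpha,\beta$ appearing linearly in the objective and in the constraints. First I would recast $\rcm$ in the abstract form $\rp$ of Section~\ref{robust-duality}: take $\B = \Xs\times\R\times\R$ with variable $(w,\alpha,\beta)$, objective $f(w,\alpha,\beta)=\frac12\|w\|^2-(\alpha-\beta)$, and for each $i\in I_+$ the uncertain constraint $g_i(w,\alpha,\beta,x_i)=\alpha-\langle x_i,w\rangle\le 0$, while for $i\in I_-$ take $g_i(w,\alpha,\beta,x_i)=\langle x_i,w\rangle-\beta\le 0$. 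Each $g_i(\cdot,\cdot,\cdot,x_i)$ is convex continuous, affine in $x_i$, so the linearity hypothesis of Proposition~\ref{prop-CNS} holds with $\mathcal C = X$ and $g_i(w,\alpha,\beta,\cdot)\in X^*$; here one uses that $K_i$ weakly compact and convex makes it a weakly convex subset in the required sense.

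Next I would verify the Slater condition and nonemptiness of the feasible set exactly as in Theorem~\ref{thm-representer}: from $K_+\cap K_-=\emptyset$ with both sets weakly compact, Klee's separation theorem \cite{Klee} gives $w\in\Xs$ and scalars $\alpha>\alpha'>\beta'>\beta$ with $\inf_{x\in K_+}\langle x,w\rangle>\alpha'$ and $\sup_{x\in K_-}\langle x,w\rangle<\beta'$; a suitable rescaling of $(w,\alpha',\beta')$ produces a strictly feasible point. For existence of a minimizer one cannot invoke coercivity of $f$ directly, since $f$ is unbounded below in $(\alpha,\beta)$ on all of $\B$; instead I would note that on the feasible set the constraints $\langle x_i,w\rangle\ge\alpha$ ($i\in I_+$) and $\langle x_i,w\rangle\le\beta$ ($i\in I_-$) force $\alpha-\beta\le \inf_{x\in K_+}\langle x,w\rangle-\sup_{x\in K_-}\langle x,w\rangle\le C\|w\|$ for a constant $C$ depending on $\max_i\sup_{x\in K_i}\|x\|$, so $f(w,\alpha,\beta)\ge \frac12\|w\|^2-C\|w\|$ is coercive and weakly$*$ \lsc along feasible sequences; weak$*$ closedness of the feasible set then yields a solution via \cite[Corollary~1.8]{Cio}. (One should also record that, without extra assumptions on the geometry of $\Xs$, only $\xo w$ and the value $\xo\alpha-\xo\beta$ need be unique, which is why the statement omits a uniqueness claim.)

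Finally I would compute the subdifferentials and apply Proposition~\ref{prop-CNS}. For $i\in I_+$, $\partial_{w,\alpha,\beta}g_i(w,\alpha,\beta,x_i)=\{(-x_i,1,0)\}$; for $i\in I_-$, $\partial_{w,\alpha,\beta}g_i(w,\alpha,\beta,x_i)=\{(x_i,0,-1)\}$; and $\partial f(w,\alpha,\beta)=\{(s,-1,1):s\in M(w)\}$, using $\partial(\tfrac12\|\cdot\|^2)=M$. Writing $y_i=\pm1$ to unify the two cases, the $w$-component of the inclusion $0\in\partial f(\xo w,\xo\alpha,\xo\beta)+\sum_i\xo\lambda_i\partial_{w,\alpha,\beta}g_i$ reads $0\in M(\xo w)-\sum_i y_i\xo\lambda_i\xo x_i$, i.e.\ $\xo w\in M(\sum_i y_i\xo\lambda_i\xo x_i)$; the $\alpha$-component gives $-1+\sum_{i\in I_+}\xo\lambda_i=0$ and the $\beta$-component $1-\sum_{i\in I_-}\xo\lambda_i=0$, i.e.\ $\sum_{i\in I_\circ}\xo\lambda_i=1$ for $\circ\in\{+,-\}$. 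The feasibility conditions (\ref{kktRfeas}) translate into the two $\max$-inequalities, and the complementarity conditions (\ref{kktEcomp}) into $\xo\lambda_i(\xo\alpha-\langle\xo x_i,\xo w\rangle)=0$ for $i\in I_+$ and $\xo\lambda_i(\langle\xo x_i,\xo w\rangle-\xo\beta)=0$ for $i\in I_-$. I expect the only genuinely nontrivial point to be the existence-of-minimizer argument, where the naive coercivity reasoning of Theorem~\ref{thm-representer} fails and must be replaced by the feasibility-restricted estimate above; everything else is a routine transcription.
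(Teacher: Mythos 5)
Your proposal is correct and follows exactly the route the paper intends (the paper omits this proof, declaring it ``identical'' to Theorem~\ref{thm-representer}): recast $\rcm$ as an instance of $\rp$ on $\B=\Xs\times\R\times\R$, check the Slater condition via Klee separation of the weakly compact sets $K_+$ and $K_-$, and read off the stated KKT system from Proposition~\ref{prop-CNS}, with the subdifferentials $\{(-x_i,1,0)\}$, $\{(x_i,0,-1)\}$ and $\{(s,-1,1):s\in M(w)\}$ computed as you do. Your one substantive addition --- observing that $f(w,\alpha,\beta)=\tfrac12\|w\|^2-(\alpha-\beta)$ is \emph{not} coercive on all of $\B$ and replacing that step by the feasibility-restricted estimate $\alpha-\beta\le C\|w\|$ --- is a genuine and necessary correction to the paper's claim that the proofs are literally identical, though to conclude that a minimizing sequence is bounded you should also record the one-sided bounds $\alpha\le R\|w\|$ and $\beta\ge -R\|w\|$ (with $R=\max_i\sup_{x\in K_i}\|x\|$), since the single inequality $f\ge\tfrac12\|w\|^2-C\|w\|$ controls $\|w\|$ but not $\alpha$ and $\beta$ separately.
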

We now extend the duality relationship between  $\svm$ and $\cm$ and the resulting geometric interpretation  \cite{BenBre} to their robust counterparts.
Define the uncertain Lagrangian $L_1$ on $\Xs \times \R\times \R^m_+\times K$ by
\begin{eqnarray*}
 L_1(w,b;\lambda,x)&=&  \frac{1}{2}\|w\|^2 +\sum_{i=1}^m
     \lambda_i(1-y_i(\langle x_i, w \rangle  + b)) \\
 &=& \frac{1}{2}\|w\|^2 - \left\langle \sum_{i=1}^m \lambda_i y_i x_i, w \right \rangle + \sum_{i=1}^m \lambda_i  - b\sum_{i=1}^m \lambda_i y_i.\label{noisyLagr}
\end{eqnarray*}
Let us now take a closer look at $\odrsvm$
\begin{equation}\label{primal}
\sup_{(\lambda,x)\in \R^m_+\times K}\inf_{(w,b)\in \Xs\times\R}\ L_1(w,b;\lambda,x).
\end{equation}
Taking the subdifferential of $L_1$ with respect to $w$ and $b$ yields
 \begin{eqnarray}
\partial_w L_1(w,b,\lambda,x) &=& M(w)+ \{-\sum_{i=1}^m \lambda_i y_i x_i\},\label{sdLw}\\
\partial_b L_1(w,b,\lambda,x) &=& -\sum_{i=1}^m \lambda_i y_i. \label{sdLb}
\end{eqnarray}
Letting $0$ belong to the subdifferentials (\ref{sdLw}) and (\ref{sdLb}) to zero gives
 \begin{eqnarray}
 \sum_{i=1}^m \lambda_i y_i x_i &\in &  M(w), \label{sdLw1}\\
 \sum_{i=1}^m \lambda_i y_i &=&0.\label{sdLb1}
\end{eqnarray}
Substituting (\ref{sdLw1}) and (\ref{sdLb1}) in (\ref{noisyLagr}) subject to the relevant constraints yields the dual $\odrsvm$  stated as follows
$$
\odrsvm\qquad 
\begin{array}{rl}
 \displaystyle\sup_{(\lambda,x)\in \R_+^m\times K} &\sum_{i=1}^m \lambda_i - \frac{1}{2}\left\| \sum_{i=1}^m \lambda_i y_i x_i \right\|^2 \\
 \sth  & \sum_{i=1}^m \lambda_i y_i =0.
\end{array} 
$$
By the same arguments and considering  $L_2$ on $\Xs \times \R\times\R\times \R^m_+\times K$ defined by
$$
 L_2(w,\alpha,\beta;\lambda,x)=\frac{1}{2}\|w\|^2-(\alpha-\beta) +\sum_{i\in I_+} 
     \lambda_i(\alpha-\langle x_i, w \rangle)-\sum_{i\in I_-} 
     \lambda_i(\beta-\langle x_i, w \rangle),\\
$$
we get
$$
\odrcm\qquad 
\begin{array}{rl}
 \displaystyle\sup_{(\lambda,x)\in \R_+^m\times K} &  - \frac{1}{2}\left\| \sum_{i=1}^m \lambda_i y_i x_i \right\|^2 \\
 \sth  & \sum_{i\in I_\circ} \lambda_i =1, \quad \circ \in \{+,-\}
\end{array} 
$$
which is not else but the problem of minimizing the (squared) distance between the two convex hulls $K_+$ and $K_-$.
%

The following theorem states that $\rsvm$ and $\rcm$ are equivalent.

\begin{theorem}
Assume that the uncertainty sets $K_i$,  \dea{i}{m}, are convex and weakly compact and $K_+\cap K_-=\emptyset$, then
\begin{enumerate}
\item 
If $(\xo{w},\xo{b};\xo{\lambda},\xo{x})$ is a solution to the pair $\rsvm-\odrsvm$, then 
$$\left(\frac{2\xo{w}}{\sum_{i=1}^{m}\xo{\lambda}_i} , \frac{2(1-\xo{b})}{\sum_{i=1}^{m}\xo{\lambda}_i},\frac{2(-1-\xo{b})}{\sum_{i=1}^{m}\xo{\lambda}_i};
\frac{2\xo{\lambda}}{\sum_{i=1}^{m}\xo{\lambda}_i},\xo{x}\right)$$
  is a solution to the pair $\rcm-\odrcm$.
\item
If $(\xo{w},\xo{\alpha},\xo{\beta};\xo{\lambda},\xo{x})$    is a solution to the pair $\rcm-\odrcm$ then 
$$\left(\frac{2\xo{w}}{\xo{\alpha}-\xo{\beta}} ,
-\frac{\xo{\alpha}+\xo{\beta}}{\xo{\alpha}-\xo{\beta}};\frac{2\xo{\lambda}}{\xo{\alpha}-\xo{\beta}},\xo{x}\right)$$
  is a solution to the pair $\rsvm-\odrsvm$.
\end{enumerate}
\end{theorem}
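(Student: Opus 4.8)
The plan is to exploit the fact, already established in the excerpt, that solutions of a robust primal--optimistic dual pair are exactly the saddle points of the associated uncertain Lagrangian, equivalently the triples satisfying the KKT-type conditions (\ref{kktRfeas})--(\ref{kktEcomp}). For $\rsvm-\odrsvm$ these are precisely (\ref{cfeas})--(\ref{compS}) from Theorem \ref{thm-representer}, and one writes down the analogous list for $\rcm-\odrcm$ (the conditions displayed in the second Theorem of Section \ref{geometric-view}, which play for $L_2$ the role that (\ref{cfeas})--(\ref{compS}) play for $L_1$). Thus the whole proof reduces to a purely algebraic verification: given a point satisfying one KKT system, check that the proposed rescaled point satisfies the other. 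No compactness, separation, or duality-mapping arguments are needed beyond what Theorem \ref{thm-representer} and its $\rcm$ analogue already provide; the geometric content is entirely in those theorems.

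For part (1), I would start from $(\xo{w},\xo{b};\xo{\lambda},\xo{x})$ satisfying (\ref{cfeas})--(\ref{compS}), set $\sigma=\sum_{i=1}^m\xo\lambda_i$, and introduce $w'=2\xo w/\sigma$, $\alpha'=2(1-\xo b)/\sigma$, $\beta'=2(-1-\xo b)/\sigma$, $\lambda'=2\xo\lambda/\sigma$, $x'=\xo x$. First I would note $\sigma>0$: indeed $\sigma=0$ together with $\xo\lambda\geq0$ forces $\xo\lambda=0$, hence by (\ref{critW}) $\xo w\in M(0)=\{0\}$, i.e.\ $\xo w=0$; but then (\ref{cfeas}) reads $1-y_i\xo b\le 0$ for all $i$, impossible since it would require $\xo b\ge 1$ and $\xo b\le -1$ simultaneously (using that both classes are nonempty, which follows from $K_+\cap K_-=\emptyset$). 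With $\sigma>0$ the rescaling is legitimate. Then: feasibility for $i\in I_+$, $\max_{x_i\in K_i}(\alpha'-\langle x_i,w'\rangle)\le0$, is $(2/\sigma)\max_{x_i}(1-\langle x_i,\xo w\rangle-\xo b)\le0$, which is (\ref{cfeas}) for $y_i=+1$; similarly for $I_-$. The stationarity $w'\in M(\sum_i y_i\lambda_i'x_i')$ follows from (\ref{critW}) and the positive homogeneity of the duality map, $M(tx)=tM(x)$ for $t>0$ (from (\ref{eq:dualitymap})), since $\sum_i y_i\lambda_i'x_i' = (2/\sigma)\sum_i y_i\xo\lambda_i\xo x_i$ and $w'=(2/\sigma)\xo w$. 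The constraint $\sum_{i\in I_\circ}\lambda_i'=1$ splits (\ref{critB}): from $\sum_i y_i\xo\lambda_i=0$ one gets $\sum_{i\in I_+}\xo\lambda_i=\sum_{i\in I_-}\xo\lambda_i=\sigma/2$, hence $\sum_{i\in I_\circ}\lambda_i'=1$. Complementarity: for $i\in I_+$, $\lambda_i'(\alpha'-\langle\xo x_i,w'\rangle) = (4/\sigma^2)\xo\lambda_i(1-\langle\xo x_i,\xo w\rangle-\xo b)=0$ by (\ref{compS}); similarly for $I_-$. This exhausts the $\rcm$ KKT list, so by the $\rcm$ analogue of Theorem \ref{thm-representer} the rescaled point solves $\rcm-\odrcm$.

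For part (2) the computation runs in reverse: from $(\xo w,\xo\alpha,\xo\beta;\xo\lambda,\xo x)$ solving $\rcm-\odrcm$, set $\tau=\xo\alpha-\xo\beta$ and check $\tau>0$. Here one uses that a solution of $\rcm$ must have $\xo\alpha>\xo\beta$: the feasibility conditions give $\min_{K_+}\langle\cdot,\xo w\rangle\ge\xo\alpha\ge\xo\beta\ge\max_{K_-}\langle\cdot,\xo w\rangle$, and if $\xo\alpha=\xo\beta$ the objective becomes $\tfrac12\|\xo w\|^2$, minimized at $\xo w=0$, forcing $\xo\alpha=\xo\beta=0$; but then $w=0,\alpha=1,\beta=-1$ is feasible with strictly smaller objective ($-2<0$), contradicting optimality — so $\tau>0$. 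Then $w''=2\xo w/\tau$, $b''=-(\xo\alpha+\xo\beta)/\tau$, $\lambda''=2\xo\lambda/\tau$, $x''=\xo x$, and one verifies (\ref{cfeas})--(\ref{compS}) from the $\rcm$ KKT conditions by the same elementary manipulations, noting $\alpha'' - \langle x_i,w''\rangle$-type expressions collapse to $(2/\tau)(\xo\alpha-\langle x_i,\xo w\rangle)$ and that $1-y_i(\langle x_i,w''\rangle+b'')$ equals, for $y_i=+1$, $(2/\tau)(\xo\alpha-\langle x_i,\xo w\rangle)$ and for $y_i=-1$, $(2/\tau)(\langle x_i,\xo w\rangle-\xo\beta)$, after substituting the formulas for $w'',b''$. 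The summation conditions $\sum_{i\in I_\circ}\xo\lambda_i=1$ yield $\sum_i y_i\lambda_i''=0$ and $\sum_i\xo\lambda_i=2$, so $\sum_i\lambda_i''=4/\tau$; one then checks the inverse rescaling in part (1) sends this back consistently, which is a useful sanity check that the two maps are mutually inverse up to the identifications $\sigma\leftrightarrow 4/\tau$.

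The main obstacle, such as it is, is not in the algebra but in pinning down the two positivity facts $\sigma>0$ and $\tau=\xo\alpha-\xo\beta>0$, since the rescalings are otherwise undefined; both follow from the nonemptiness of $I_+$ and $I_-$ (guaranteed by $K_+\cap K_-=\emptyset$, which forces both index sets nonempty) together with the optimality of the respective solutions, as sketched above. Everything else is the routine observation that the KKT systems of $\rsvm-\odrsvm$ and $\rcm-\odrcm$ are related by exactly the displayed affine change of variables, combined with the positive homogeneity $M(tx)=tM(x)$ of the duality mapping; the two theorems characterizing solutions via KKT then close the argument in both directions.
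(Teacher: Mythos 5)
Your overall strategy is sound and close in spirit to the paper's, but executed on a different (equivalent) characterization: the paper takes the saddle-point inequalities (\ref{SP}) for $L_2$, substitutes the rescaled variables directly into them, and divides by the square of the scaling factor to obtain the saddle-point inequalities for $L_1$ (and conversely), never touching the KKT lists; you instead transfer the KKT systems (\ref{cfeas})--(\ref{compS}) and their $\rcm$ analogue term by term. The two are interchangeable via Proposition \ref{prop-SP}, so your route is legitimate; what it costs you is the extra ingredient of positive homogeneity of the duality mapping, $M(tx)=tM(x)$ for $t>0$ (true, and immediate from (\ref{eq:dualitymap})), which the paper's inequality-division argument never needs. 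What it buys you is a fully explicit verification of each condition, whereas the paper's substitution requires some care in choosing which variables to rescale inside the inequalities.

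There is one concrete flaw: your argument that $\tau=\xo{\alpha}-\xo{\beta}>0$ uses the point $(w,\alpha,\beta)=(0,1,-1)$ as a feasible comparison point for $\rcm$, but it is not feasible --- with $w=0$ the constraint $\min_{x_i\in K_i}\langle x_i,w\rangle\geq \alpha$ reads $0\geq 1$. The correct repair is the strict-separation construction already used in the proof of Theorem \ref{thm-representer}: since $K_+$ and $K_-$ are weakly compact, convex and disjoint, there exist $w_0$ and $\alpha_0>\beta_0$ with $\inf_{K_+}\langle x,w_0\rangle\geq\alpha_0$ and $\sup_{K_-}\langle x,w_0\rangle\leq\beta_0$; scaling by small $t>0$ gives a feasible point with objective $\tfrac{t^2}{2}\|w_0\|^2-t(\alpha_0-\beta_0)<0$, so the optimal value of $\rcm$ is negative, which forces $\xo{\alpha}-\xo{\beta}>\tfrac12\|\xo{w}\|^2\geq 0$. (Note the paper simply asserts $\sum_i\xo{\lambda}_i\neq 0$ and $\xo{\alpha}-\xo{\beta}\neq 0$ without proof, so you are right to supply an argument --- it just needs the correct witness. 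Your argument for $\sigma>0$ is fine, modulo the standing assumption that both $I_+$ and $I_-$ are nonempty, which $K_+\cap K_-=\emptyset$ does not literally imply but which is implicit throughout the paper.)
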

\begin{proof}
We use Proposition \ref{prop-SP} to link the saddles points of $L_1$ and $L_2$. Let us first observe that under the assumptions of the theorem, $\sum_{i=1}^m \xo{\lambda}_i\neq 0$ and $\xo{\alpha}-\xo{\beta}\neq 0.$ Suppose that $(\xo{w},\xo{\alpha},\xo{\beta};\xo{\lambda},\xo{x})$ is a saddle point of $L_2$, then for all $ (w,\alpha ,\beta;\lambda,x)\in \X\times \R\times\R\times \R^m_+\times K$  we have
     
\begin{eqnarray*}
&&\frac{1}{2}\|\xo{w}\|^2-(\xo{\alpha}-\xo{\beta}) +\sum_{i\in I_+} 
     \lambda_i(\xo{\alpha}-\langle x_i, \xo{w} \rangle)-\sum_{i\in I_-} 
     \lambda_i(\xo{\beta}-\langle x_i, \xo{w} \rangle)\\
&\leq& \frac{1}{2}\|\xo{w}\|^2-(\xo{\alpha}-\xo{\beta}) +\sum_{i\in I_+} 
    \xo{\lambda}_i(\xo{\alpha}-\langle \xo{x}_i, \xo{w} \rangle)-\sum_{i\in I_-} 
    \xo{\lambda}_i(\xo{\beta}-\langle \xo{x}_i, \xo{w} \rangle)\\
&\leq &  \frac{1}{2}\|w\|^2-(\alpha-\beta) +\sum_{i\in I_+} 
    \xo{\lambda}_i(\alpha-\langle \xo{x}_i, w \rangle)-\sum_{i\in I_-} 
    \xo{\lambda}_i(\beta-\langle \xo{x}_i, w \rangle).\\
\end{eqnarray*}     
In particular, by choosing $(\frac{\xo{\alpha}-\xo{\beta}}{2})w$, $\frac{\xo{\alpha}-\xo{\beta}}{2}(1-b)$,$-\frac{\xo{\alpha}-\xo{\beta}}{2}(1+b)$ and $\frac{\xo{\alpha}-\xo{\beta}}{2}\lambda$ instead of $w$, $\alpha$, $\beta$ and $\lambda$ respectively we get that for all
 $ (w,b;\lambda,x)\in \X\times \R\times \R^m_+\times K$ 
\begin{eqnarray*}
&&\frac{1}{2}\|\xo{w}\|^2-(\xo{\alpha}-\xo{\beta}) +\sum_{i\in I_+} 
     \lambda_i\left(\frac{\xo{\alpha}-\xo{\beta}}{2}\right)(\xo{\alpha}-\langle x_i, \xo{w} \rangle)\\
      && \hspace{4cm}  -\sum_{i\in I_-} 
     \lambda_i\left(\frac{\xo{\alpha}-\xo{\beta}}{2}\right)(\xo{\beta}-\langle x_i, \xo{w} \rangle)\\
&\leq& \frac{1}{2}\|\xo{w}\|^2-(\xo{\alpha}-\xo{\beta}) +\sum_{i\in I_+} 
    \xo{\lambda}_i(\xo{\alpha}-\langle \xo{x}_i, \xo{w} \rangle)-\sum_{i\in I_-} 
    \xo{\lambda}_i(\xo{\beta}-\langle \xo{x}_i, \xo{w} \rangle)\\
&\leq &  \frac{1}{2}\left\|\left(\frac{\xo{\alpha}-\xo{\beta}}{2}\right)w\right\|^2 -  \left(\xo{\alpha}-\xo{\beta}\right) \\
    && \hspace{0.5cm} +\sum_{i\in I_+} 
    \xo{\lambda}_i \left(\frac{\xo{\alpha}-\xo{\beta}}{2}\right) (1-b-\langle \xo{x}_i, w \rangle)+\sum_{i\in I_-} 
    \xo{\lambda}_i \left(\frac{\xo{\alpha}-\xo{\beta}}{2}\right) (1+b+\langle \xo{x}_i, w\rangle).\\
\end{eqnarray*}   
Dividing by $(\frac{\xo{\alpha}-\xo{\beta}}{2})^2$ yields
\begin{eqnarray*}
&&\frac{1}{2}\left\| \left(\frac{2\xo{w}}{\xo{\alpha}-\xo{\beta}}\right) \right\|^2+
\sum_{i\in I_+} 
     \lambda_i \left( \frac{ 2\xo{\alpha}}{\xo{\alpha}-\xo{\beta}} -\left\langle x_i, \frac{2\xo{w}}{\xo{\alpha}-\xo{\beta}}\right\rangle\right) \\
  && \hspace{4cm}   -\sum_{i\in I_-} 
     \lambda_i \left( \frac{ 2\xo{\beta}}{\xo{\alpha}-\xo{\beta}} -\left\langle x_i, \frac{2\xo{w}}{\xo{\alpha}-\xo{\beta}}\right\rangle\right)\\
&\leq &\frac{1}{2}\left\| \left(\frac{2\xo{w}}{\xo{\alpha}-\xo{\beta}}\right) \right\|^2+
\sum_{i\in I_+} 
    \frac{ 2\xo{\lambda}_i}{\xo{\alpha}-\xo{\beta}} \left( \frac{ 2\xo{\alpha}}{\xo{\alpha}-\xo{\beta}} -\left\langle \xo{x}_i, \frac{2\xo{w}}{\xo{\alpha}-\xo{\beta}}\right\rangle\right)\\
 && \hspace{4cm} -\sum_{i\in I_-} 
     \frac{ 2\xo{\lambda}_i}{\xo{\alpha}-\xo{\beta}} \left( \frac{ 2\xo{\beta}}{\xo{\alpha}-\xo{\beta}}-\left\langle \xo{x}_i, \frac{2\xo{w}}{\xo{\alpha}-\xo{\beta}}\right\rangle\right)\\
&\leq &  \frac{1}{2}\left\|w\right\|^2  +\sum_{i\in I_+} 
   \frac{ 2\xo{\lambda}_i}{\xo{\alpha}-\xo{\beta}}  \left(1-b-\langle \xo{x}_i, w \rangle\right)+\sum_{i\in I_-} 
 \frac{ 2\xo{\lambda}_i}{\xo{\alpha}-\xo{\beta}}\left(1+b+\langle \xo{x}_i, w\rangle\right).
\end{eqnarray*}  
By remarking that $ \frac{ 2\xo{\alpha}}{\xo{\alpha}-\xo{\beta}}=
1+ \frac{\xo{\alpha}+\xo{\beta}}{\xo{\alpha}-\xo{\beta}}$ and $ \frac{ 2\xo{\beta}}{\xo{\alpha}-\xo{\beta}}=
-1+ \frac{\xo{\alpha}+\xo{\beta}}{\xo{\alpha}-\xo{\beta}}$ it follows 
 \begin{eqnarray*}
L_1\left(\frac{2\xo{w}}{\xo{\alpha}-\xo{\beta}} ,
-\frac{\xo{\alpha}+\xo{\beta}}{\xo{\alpha}-\xo{\beta}};\lambda ,x\right) &\leq& L_1\left(\frac{2\xo{w}}{\xo{\alpha}-\xo{\beta}} ,
-\frac{\xo{\alpha}+\xo{\beta}}{\xo{\alpha}-\xo{\beta}}; \frac{2\xo{\lambda}}{\xo{\alpha}-\xo{\beta}},\xo{x}\right)
\leq L_1\left(w,b; \frac{2\xo{\lambda}}{\xo{\alpha}-\xo{\beta}},\xo{x}\right)
\end{eqnarray*} 
 which means that the point $\left(\frac{2\xo{w}}{\xo{\alpha}-\xo{\beta}} ,
-\frac{\xo{\alpha}+\xo{\beta}}{\xo{\alpha}-\xo{\beta}};\frac{2\xo{\lambda}}{\xo{\alpha}-\xo{\beta}},\xo{x}\right)$  is a saddle point of $L_1$.
Conversely, consider a saddle point $(\xo{w},\xo{b};\xo{\lambda},\xo{x})$ of $L_1$, that  is for all $ (w,b;\lambda,x)\in \X\times\R\times \R^m_+\times K$ 
\begin{eqnarray*}
L_1(\xo{w},\xo{b};\lambda,x)\leq L_1(\xo{w},\xo{b};\xo{\lambda},\xo{x})\leq  L_1(w,b;\xo{\lambda},\xo{x}).
\end{eqnarray*} 
Like before by choosing $\frac{\sum_{i=1}^{m}\xo{\lambda}_i}{2}w$,  $\frac{\sum_{i=1}^{m}\xo{\lambda}_i}{2} (\alpha-\beta) $ and $\frac{\sum_{i=1}^{m}\xo{\lambda}_i}{2}\lambda$ instead of $w$, $b$ and $\lambda$ and then dividing by $(\frac{\sum_{i=1}^{m}\xo{\lambda}_i}{2})^2$ we obtain 
 \begin{eqnarray*}
&& L_2\left(\frac{2\xo{w}}{\sum_{i=1}^{m}\xo{\lambda}_i} , 
\frac{2(1-\xo{b})}{\sum_{i=1}^{m}\xo{\lambda}_i},\frac{2(-1-\xo{b})}{\sum_{i=1}^{m}\xo{\lambda}_i};\lambda ,x\right)\\
&\leq & L_2\left(\frac{2\xo{w}}{\sum_{i=1}^{m}\xo{\lambda}_i} , 
\frac{2(1-\xo{b})}{\sum_{i=1}^{m}\xo{\lambda}_i},\frac{2(-1-\xo{b})}{\sum_{i=1}^{m}\xo{\lambda}_i};\frac{2\xo{\lambda}}{\sum_{i=1}^{m}\xo{\lambda}_i},\xo{x}\right)\\
&\leq& L_2\left(w,\alpha,\beta;
\frac{2\xo{\lambda}}{\sum_{i=1}^{m}\xo{\lambda}_i},\xo{x}
\right)
\end{eqnarray*} 
 which means that the point $\left(\frac{2\xo{w}}{\sum_{i=1}^{m}\xo{\lambda}_i} , 
\frac{2(1-\xo{b})}{\sum_{i=1}^{m}\xo{\lambda}_i},\frac{2(-1-\xo{b})}{\sum_{i=1}^{m}\xo{\lambda}_i};
\frac{2\xo{\lambda}}{\sum_{i=1}^{m}\xo{\lambda}_i},\xo{x}\right)$
  is a saddle point of $L_2$. $\qed$

\end{proof}

\section{Game theoretic interpretation of nearest point pair problem  between two disjoint closed convex sets}\label{game-formulation}

Based on geometric properties of class separation in the dual space, a non-cooperative game formulation is given for SVM in \cite{Cou}. 
 In this section we formulate the more general problem of finding the two closest points in two closed convex set as a search of Nash equilibrium for a two-player game.  In this game, each player chooses one point from its set  and gets a payoff given by the distance between its associated set and an hyperplane defined through the duality mapping that is located at the middle of the segment joining the points chosen by the two players.
 As a by-product this will give a game theoretic interpretation for the robust SVM too. One may find an interest in such formulation in applications where data privacy is crucial. Indeed, as each player only has knowledge of its own data points, separation can be carried out in a distributed manner where data privacy is preserved.

Given two sets $A,B \subset X$ we denote the distance between  $A$ and $B$ by $\dist(A,B)=\inf_{x \in A, y\in B} \|x-y\|$. When $A=\{x\}$, we use the simplified notation $\dist(x,B)$. The distance from a point to an hyperplane is given by \cite[Lemma 1]{DerLee}
\begin{equation}\label{distToHyp}
\dist(x_0,\left\{x\in \X: \langle x,w\rangle -c=0 \right\}) = \frac{|\langle x_0,w\rangle -c|}{\|w\|}.
\end{equation}  
Moreover, the set of nearest points in $A$ to $x\in X\setminus A$  is denoted $\prj_A(x)=\mathrm{argmin}_{y\in A}\|x-y\|$.
Suppose that $X$ is smooth which imply  that the duality mapping $M$ is single-valued \cite[Corollary 4.5]{Cio}.

Consider the following two players game. The player $i$ picks a point on the convex $C_i$, $i\in \{1,2\}$. Then the unique point $w$  of $M(x_1-x_2)$ is used to define the hyperplane $H(x_1,x_2):=\{x\in \X : \langle w,x\rangle= \langle w, \frac{x_1+x_2}{2}\rangle\}.$ This hyperplane is halfway between $x_1$ and $x_2$. Indeed
\begin{eqnarray*}
\dist(x_1,H(x_1,x_2))&=&\frac{ \left|\langle w,x_1\rangle-\langle w, \frac{x_1+x_2}{2}\rangle\right|}{\|w\|}\\
&=& \frac{ \left|\langle w, x_1-x_2 \rangle\right|}{2\|w\|}\\
&=&  \frac{ \left\|x_1-x_2 \right\|}{2}.
\end{eqnarray*}
Similarly, $d(x_2,H(x_1,x_2))=\frac{ \left\|x_1-x_2 \right\|}{2 }$.
The payoff is defined by
$$
v_i(x_1,x_2):= \dist(H(x_1,x_2),C_i), \qquad i\in\{1,2\}.
$$
If $X$ is a Hilbert space then the hyperplane is defined by
$$
H(x_1,x_2)=\{x\in \X : \langle x_1-x_2,x-\frac{x_1+x_2}{2}\rangle=0\}.
$$

This game, denoted by $G$, can be interpreted as if each player was trying to "push" the hyperplane further to himself. The payoff function $v_i$  measures how far the hyperplane is to the player.

A point $(\xo{x}_1,\xo{x}_2)$ is called a Nash Equilibrium (NE) for this game iff 
$$
\xo{x}_1\in \argmax_{x_1 \in C_1}v_1(x_1,\xo{x}_2)\quad \mbox{ and }\quad
\xo{x}_1\in \argmax_{x_2 \in C_2}v_1(\xo{x}_1, x_2).
$$
We state the  main result of this section.
\begin{theorem}\label{thm-game}
Let $C_1$ and $C_2$ be two closed convex sets in a reflexive and smooth Banach space $X$. If $C_1\cap C_2=\emptyset,$ then
 $(\xo{x}_1,\xo{x}_2)$  is (NE) for $G$ iff $\|\xo{x}_1-\xo{x}_2\|=\dist(C_1,C_2).$
 Moreover, in the that case the payoffs for both players are equal to 
$\frac{1}{2}\|x_1-x_2\|$.
\end{theorem}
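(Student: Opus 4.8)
The plan is to reduce the whole statement to a single \emph{best-response lemma}. Fix an opponent position $x_2\in C_2$; since $C_1\cap C_2=\emptyset$ we have $x_2\notin C_1$, hence $d_1:=\dist(x_2,C_1)>0$, and $\prj_{C_1}(x_2)\neq\emptyset$ because $C_1$ is closed convex and $X$ is reflexive. The lemma I would establish is: for every $x_1\in C_1$, $v_1(x_1,x_2)\le \tfrac12 d_1$, with equality if and only if $x_1\in\prj_{C_1}(x_2)$; consequently $\max_{x_1\in C_1}v_1(x_1,x_2)=\tfrac12 d_1$ and $\argmax_{x_1\in C_1}v_1(x_1,x_2)=\prj_{C_1}(x_2)$. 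The symmetric statement holds for player $2$ with $x_1\in C_1$ fixed.

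For the lemma, write $w'=M(x_1-x_2)\in\Xs$ (single-valued since $X$ is smooth, as already used to define $H$), so that $\|w'\|=\|x_1-x_2\|$ and $\langle x_1-x_2,w'\rangle=\|x_1-x_2\|^2$; then $H(x_1,x_2)=\{z:\langle z-x_2,w'\rangle=\tfrac12\|x_1-x_2\|^2\}$ and, by (\ref{distToHyp}), $\dist(z,H(x_1,x_2))=\|x_1-x_2\|^{-1}\,\big|\langle z-x_2,w'\rangle-\tfrac12\|x_1-x_2\|^2\big|$. If $H(x_1,x_2)\cap C_1\neq\emptyset$ then $v_1(x_1,x_2)=0<\tfrac12 d_1$. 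Otherwise $x_1\in C_1$ lies strictly in the half-space $\{z:\langle z-x_2,w'\rangle>\tfrac12\|x_1-x_2\|^2\}$, and $C_1$, being convex hence connected and disjoint from the hyperplane, lies entirely in that half-space; therefore $v_1(x_1,x_2)=\|x_1-x_2\|^{-1}\big(\inf_{z\in C_1}\langle z-x_2,w'\rangle-\tfrac12\|x_1-x_2\|^2\big)$, and choosing any $\hat x_1\in\prj_{C_1}(x_2)$ and using $\langle\hat x_1-x_2,w'\rangle\le\|\hat x_1-x_2\|\,\|w'\|=d_1\|x_1-x_2\|$ together with $\|x_1-x_2\|\ge d_1$ gives $v_1(x_1,x_2)\le d_1-\tfrac12\|x_1-x_2\|\le\tfrac12 d_1$, equality forcing $\|x_1-x_2\|=d_1$. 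Conversely, if $x_1\in\prj_{C_1}(x_2)$, the first-order optimality condition for $\min_{z\in C_1}\tfrac12\|z-x_2\|^2$ (whose subgradient at $x_1$ is $M(x_1-x_2)=\{w'\}$) reads $\langle z-x_1,w'\rangle\ge0$ for all $z\in C_1$, so $\langle z-x_2,w'\rangle\ge\|x_1-x_2\|^2=d_1^2$ on $C_1$ and hence $v_1(x_1,x_2)\ge\tfrac12 d_1$; with the upper bound this yields $v_1(x_1,x_2)=\tfrac12 d_1$.

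From the lemma, $(\xo{x}_1,\xo{x}_2)$ is a Nash equilibrium if and only if $\xo{x}_1\in\prj_{C_1}(\xo{x}_2)$ and $\xo{x}_2\in\prj_{C_2}(\xo{x}_1)$. If $\|\xo{x}_1-\xo{x}_2\|=\dist(C_1,C_2)$, then $\dist(C_1,C_2)\le\dist(\xo{x}_2,C_1)\le\|\xo{x}_1-\xo{x}_2\|$ forces all three to coincide, so $\xo{x}_1\in\prj_{C_1}(\xo{x}_2)$, and symmetrically $\xo{x}_2\in\prj_{C_2}(\xo{x}_1)$; hence $(\xo{x}_1,\xo{x}_2)$ is a Nash equilibrium, and the lemma gives $v_1(\xo{x}_1,\xo{x}_2)=\tfrac12\dist(\xo{x}_2,C_1)=\tfrac12\|\xo{x}_1-\xo{x}_2\|$ and likewise $v_2(\xo{x}_1,\xo{x}_2)=\tfrac12\|\xo{x}_1-\xo{x}_2\|$, which is the ``moreover'' statement. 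Conversely, if $(\xo{x}_1,\xo{x}_2)$ is a Nash equilibrium, put $w=M(\xo{x}_1-\xo{x}_2)$; first-order optimality of $\xo{x}_1\in\prj_{C_1}(\xo{x}_2)$ gives $\langle z-\xo{x}_1,w\rangle\ge0$ for $z\in C_1$, and of $\xo{x}_2\in\prj_{C_2}(\xo{x}_1)$ gives $\langle z-\xo{x}_2,w\rangle\le0$ for $z\in C_2$ (here one uses $M(\xo{x}_2-\xo{x}_1)=-M(\xo{x}_1-\xo{x}_2)=\{-w\}$). Then for all $z_1\in C_1$, $z_2\in C_2$, $\langle z_1-z_2,w\rangle\ge\langle\xo{x}_1-\xo{x}_2,w\rangle=\|w\|^2=\|\xo{x}_1-\xo{x}_2\|^2$, whence $\|z_1-z_2\|\ge\|\xo{x}_1-\xo{x}_2\|$, i.e. $\dist(C_1,C_2)=\|\xo{x}_1-\xo{x}_2\|$.

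The crux is the best-response lemma, and within it the case distinction according to whether $H(x_1,x_2)$ meets $C_1$: bounding $v_1(x_1,x_2)$ by $\dist(\hat x_1,H(x_1,x_2))$ for a nearest point $\hat x_1$ is far too lossy when $x_1$ is far from $x_2$, so one must estimate the infimum defining $\dist(C_1,H(x_1,x_2))$ directly, which only becomes tractable once $C_1$ is known to lie strictly on one side of the hyperplane. Reflexivity is used solely to guarantee $\prj_{C_1}(x_2),\prj_{C_2}(x_1)\neq\emptyset$, and smoothness solely to make $M$ single-valued --- which is what makes $H$ well defined and, in the converse, forces the two first-order conditions to involve the same functional $w$. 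Finally, when $\dist(C_1,C_2)=0$ both sides of the equivalence fail (there is no pair with $\|\xo{x}_1-\xo{x}_2\|=0$ since $C_1\cap C_2=\emptyset$, and the converse argument shows any Nash equilibrium would give $\dist(C_1,C_2)\ge\|\xo{x}_1-\xo{x}_2\|\ge\dist(\xo{x}_2,C_1)>0$), so the statement holds vacuously in that degenerate case.
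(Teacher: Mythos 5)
Your proof is correct, but it takes a genuinely different route from the paper's. The paper works through two lemmas --- a triangle-type inequality $\|x_1-x_2\|\ge\dist(x_1,H)+\dist(x_2,H)$ for points strictly separated by a hyperplane, and the characterization of nearest-point pairs as mutual metric projections with $H$ equidistant from both sets --- then proves the forward implication by a chain of inequalities comparing the equilibrium candidate with a projection point, and the converse by contradiction from a profitable deviation. You instead establish a sharper intermediate fact: the exact best-response value $\max_{x_1\in C_1}v_1(x_1,x_2)=\frac{1}{2}\dist(x_2,C_1)$ with the $\argmax$ equal to $\prj_{C_1}(x_2)$, obtained by computing $\dist\bigl(C_1,H(x_1,x_2)\bigr)$ directly as an infimum of the linear functional $M(x_1-x_2)$ over $C_1$ after noting that $C_1$, being connected and disjoint from the hyperplane, lies on one side of it. This reduces the Nash equilibrium condition at once to mutual metric projection; the remaining equivalence with distance-achieving pairs coincides with the paper's Lemma \ref{lem-proj2} (summing the two variational inequalities $\langle z-\xo{x}_1,w\rangle\ge 0$ on $C_1$ and $\langle z-\xo{x}_2,w\rangle\le 0$ on $C_2$ against $\|w\|=\|\xo{x}_1-\xo{x}_2\|$). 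Your route buys an explicit description of the best-response correspondence as the metric projection, the equilibrium payoff without any contradiction argument, and an explicit treatment of the degenerate case $\dist(C_1,C_2)=0$, which the paper passes over; like the paper, you use reflexivity only for nonemptiness of projections and smoothness only for single-valuedness of $M$. One small remark: your closing comment that bounding $v_1$ by $\dist(\hat x_1,H)$ is ``too lossy'' is not quite accurate --- the chain $v_1\le\dist(\hat x_1,H)\le d_1-\frac{1}{2}\|x_1-x_2\|\le\frac{1}{2}d_1$ is exactly the estimate you carry out --- but this does not affect the correctness of the argument.
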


The following lemmas will be used in the proof of Theorem \ref{thm-game}.
\begin{lemma}\label{lem-proj1}
\begin{enumerate}
\item
Let $w\neq 0$ be an element from $\Xs$ and $H$ the hyperplane $H=\{x:\langle x,w\rangle=c\}$. Then for each pair of points $x_1,x_2\in \X$  strictly separated by $H,$ we have
$$
\|x_1-x_2\|\geq \dist(x_1,H)+\dist(x_2,H).
$$
\item Let $C\subset \X$ be a closed convex set and $x_0\in \X.$ Suppose that there exists $x_\star \in \prj_C(x_0)$ such that  $M(x_0-x_\star)=\{w_\star\}$
then 
\begin{equation}
\langle x-x_\star,w_\star\rangle \leq 0, \qquad  \forall x\in C.
\end{equation} 

\end{enumerate}
\end{lemma}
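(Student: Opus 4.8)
The plan is to prove the two parts separately, each being essentially a standard fact about smooth reflexive Banach spaces that nevertheless needs care with the duality mapping $M$ replacing the usual gradient.

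For part (1), let $x_1, x_2$ be strictly separated by $H=\{x:\langle x,w\rangle=c\}$, so that (after possibly replacing $w$ by $-w$) we have $\langle x_1,w\rangle > c > \langle x_2,w\rangle$. The key is that for \emph{any} point $y\in H$, the triangle-like inequality $\|x_1-x_2\|\geq \|x_1-y\| - \|y-x_2\|$ is the wrong direction; instead I would use that $\|x_1-x_2\|\,\|w\| \geq |\langle x_1-x_2, w\rangle| = \langle x_1,w\rangle - \langle x_2,w\rangle = (\langle x_1,w\rangle - c) + (c - \langle x_2,w\rangle) = |\langle x_1,w\rangle - c| + |\langle x_2,w\rangle - c|$. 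Dividing by $\|w\|$ and invoking the distance-to-hyperplane formula (\ref{distToHyp}) gives exactly $\|x_1-x_2\|\geq \dist(x_1,H)+\dist(x_2,H)$. This part is short and requires no smoothness or reflexivity.

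For part (2), this is the variational inequality characterizing a metric projection onto a convex set. I would argue as follows: fix $x\in C$ and, by convexity, consider $x_t = x_\star + t(x-x_\star)\in C$ for $t\in[0,1]$. Since $x_\star$ minimizes $\|x_0-\cdot\|$ over $C$, the scalar function $\phi(t)=\tfrac12\|x_0-x_t\|^2$ attains its minimum over $[0,1]$ at $t=0$, so $\phi'(0^+)\geq 0$. The function $\tfrac12\|\cdot\|^2$ is convex and, since $X$ is smooth, Gâteaux differentiable with derivative $M(\cdot)$ at every point (using $\partial(\tfrac12\|\cdot\|^2)=M$ together with single-valuedness of $M$ in the smooth case, as already noted in the excerpt). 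Hence $\phi'(0^+) = \langle -(x-x_\star), M(x_0-x_\star)\rangle = -\langle x-x_\star, w_\star\rangle \geq 0$, which is the claimed inequality $\langle x-x_\star, w_\star\rangle\leq 0$ for all $x\in C$. One must be slightly careful that the chain rule applies, i.e. that $t\mapsto \tfrac12\|x_0-x_t\|^2$ is differentiable from the right at $0$ with the stated derivative; this follows from Gâteaux differentiability of $\tfrac12\|\cdot\|^2$ at the nonzero point $x_0-x_\star$ (nonzero because $x_0\notin C$ in the intended application, though here it suffices that $x_\star\neq x_0$, which is implicit in writing $M(x_0-x_\star)=\{w_\star\}$ meaningfully, and if $x_0=x_\star$ the inequality is trivial since then $w_\star=0$).

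The main obstacle, such as it is, is making the differentiation argument in part (2) rigorous without appealing to more machinery than the paper has set up: the cleanest route is to use convexity of $h=\tfrac12\|\cdot\|^2$ directly via the subgradient inequality, namely $h(x_0-x_\star) \geq h(x_0 - x_t) + \langle (x_0-x_\star)-(x_0-x_t), s\rangle$ for any $s\in\partial h(x_0-x_t)$, combined with the reverse inequality at the other point, and then let $t\to 0^+$; alternatively one can simply note that minimality gives $h(x_0-x_\star)\leq h(x_0-x_t)$ for all $t\in[0,1]$ and apply the subgradient inequality for $h$ at $x_0-x_\star$ with its unique subgradient $w_\star=M(x_0-x_\star)$: $h(x_0-x_t) \geq h(x_0-x_\star) + \langle (x_0-x_t)-(x_0-x_\star), w_\star\rangle = h(x_0-x_\star) + t\langle x_\star - x, w_\star\rangle$, so $0 \leq h(x_0-x_t) - h(x_0-x_\star)$ does not immediately finish it — one genuinely needs the upper estimate too, so I would instead expand $\|x_0-x_t\|^2 = \|(x_0-x_\star) - t(x-x_\star)\|^2$ and use that $\|a-tb\|^2 \leq \|a\|^2 - 2t\langle b, M(a)\rangle + t^2\|b\|^2$ fails in general Banach spaces, which is precisely why the one-sided derivative argument is the right tool. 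So the clean final write-up uses: $0 \leq \liminf_{t\to 0^+} \frac{\|x_0-x_t\|^2 - \|x_0-x_\star\|^2}{2t}$ and identifies this limit as $-\langle x-x_\star, w_\star\rangle$ via Gâteaux differentiability of $\tfrac12\|\cdot\|^2$ at $x_0-x_\star\neq 0$.
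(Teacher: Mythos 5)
Your part (1) is correct and is exactly the paper's argument: both reduce the claim to the elementary bound \(|\langle x_1-x_2,w\rangle|\le\|x_1-x_2\|\,\|w\|\) combined with the distance-to-hyperplane formula (\ref{distToHyp}). For part (2) your skeleton coincides with the paper's (move along the segment \(x_t=x_\star+t(x-x_\star)\in C\), use minimality of \(x_\star\), let \(t\to0^+\)), but the limiting step is justified by a different technical ingredient. The paper applies the subgradient inequality for \(\tfrac12\|\cdot\|^2\) at the \emph{moving} point \(x_0-x_t\), getting \(0\ge\langle x-x_\star,w_t\rangle\) with \(w_t\in M(x_0-x_t)\), and then passes to the limit using the norm-to-weak\(^*\) upper semicontinuity of the duality mapping \cite[Theorem 4.12]{Cio}, single-valuedness of \(M\) at \(x_0-x_\star\) serving to identify the weak\(^*\) cluster point as \(w_\star\). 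You instead differentiate at the \emph{fixed} point \(x_0-x_\star\), invoking the standard fact that a continuous convex function whose subdifferential at a point is a singleton is G\^{a}teaux differentiable there with that singleton as derivative, so that the right derivative of \(t\mapsto\tfrac12\|x_0-x_t\|^2\) at \(0\) equals \(-\langle x-x_\star,w_\star\rangle\ge0\). Both routes are valid under the hypothesis \(M(x_0-x_\star)=\{w_\star\}\): yours avoids any continuity property of \(M\) at the price of the (equally standard, but uncited) differentiability characterization, while the paper's avoids any differentiability claim at the price of a weak\(^*\) compactness and upper-semicontinuity argument in the dual. Your observation that the subgradient inequality at \(x_0-x_\star\) alone points the wrong way, and your handling of the degenerate case \(x_0=x_\star\), are both correct.
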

\begin{proof}
Since  $x_1$ and $x_2$  are strictly separated, one of them is located in the positive half-space while the other one is located in the negative half-space. Suppose for example that $\langle x_1,w\rangle<c $ and $\langle x_2,w\rangle >c.$
By (\ref{distToHyp}), 
\begin{eqnarray*}
 \dist(x_1,H)+\dist(x_2,H)&=& \frac{-\langle x_1,w\rangle+ c}{\|w\|}+\frac{\langle x_2,w\rangle-c}{\|w\|}\\
 &=& \frac{-\langle x_2-x_1,w\rangle}{\|w\|}\\
 &\leq& \frac{\| x_2-x_1\|\|w\|}{\|w\|}.\\
\end{eqnarray*}
To prove the second item, let $x\in C$ and $\theta\in [0,1]$ then by the convexity of $C$, 
$\theta x+(1-\theta) x_\star \in C$. We have
\begin{eqnarray*}
0&\geq& \frac{1}{2}\|x_0-x_\star \|^2- \frac{1}{2}\|x_0-\left(\theta x+(1-\theta) x_\star)\right)\|^2\\
&=&  \frac{1}{2}\|x_0-x_\star \|^2- \frac{1}{2}\|x_0-x_\star -\theta (x- x_\star)\|^2\\
&\geq& \langle \theta (x- x_\star), w_\theta\rangle
\end{eqnarray*}
where $w_\theta \in M(x_0-x_\star -\theta (x- x_\star))$. By dividing by $\theta$ and letting $\theta$ to 0, we obtain the desired inequality since the duality mapping $M$ is norm to weak* \usc \cite[Theorem 4.12]{Cio}. $\qed$
\end{proof}

\begin{lemma}\label{lem-proj2}
Let $C_1$, $C_2$ two closed convex sets of $X$.
\begin{enumerate}
\item Let $x_1\in C_1$ and $x_2\in C_2$ such that $M(x_2-x_1)=\{w_\star \},$  then
$$
\|x_1-x_2\| =\dist(C_1,C_2)\Longleftrightarrow x_1 \in \prj_{C_1}(x_2) \mbox{ and }  x_2 \in \prj_{C_2}(x_1).
$$
Moreover, in that case 
$$
\dist(H,C_1)=\dist(H,C_2)=\frac{1}{2}\|x_1-x_2\|,
$$
where  $H$ is the hyperplane defined by $\{x\in \X : \langle w_\star,x\rangle= \langle w_\star, \frac{x_1+x_2}{2}\rangle\}.$ 
\end{enumerate}
\end{lemma}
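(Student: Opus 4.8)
The plan is to prove the equivalence by splitting it into its two implications, the forward one being essentially a tautology and the backward one carrying all the content, and then to extract the ``Moreover'' statement from the separation inequalities produced along the way.

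First I would dispatch the direction ``$\|x_1-x_2\|=\dist(C_1,C_2)\ \Rightarrow\ x_1\in\prj_{C_1}(x_2)$ and $x_2\in\prj_{C_2}(x_1)$'': for any $x\in C_1$ one has $\|x-x_2\|\ge\dist(C_1,C_2)=\|x_1-x_2\|$, so $x_1$ is a nearest point of $C_1$ to $x_2$, and symmetrically $x_2$ is a nearest point of $C_2$ to $x_1$. Nothing beyond the definition of $\dist$ is needed here, and this also makes clear why no disjointness hypothesis is required.

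For the converse I would assume $x_1\in\prj_{C_1}(x_2)$ and $x_2\in\prj_{C_2}(x_1)$, set aside the trivial case $x_1=x_2$ (where $\dist(C_1,C_2)=0$), and then apply Lemma~\ref{lem-proj1}(2) twice. Applied with $C=C_1$, $x_0=x_2$, $x_\star=x_1$ it yields $\langle x-x_1,w_\star\rangle\le 0$ for all $x\in C_1$; applied with $C=C_2$, $x_0=x_1$, $x_\star=x_2$ --- after noting that the duality mapping $M$ is odd, so that $M(x_1-x_2)=\{-w_\star\}$ --- it yields $\langle y-x_2,w_\star\rangle\ge 0$ for all $y\in C_2$. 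Combining these with the identity $\langle x_2-x_1,w_\star\rangle=\|x_2-x_1\|^2=\|w_\star\|^2$ read off from the definition (\ref{eq:dualitymap}) of $M$, one gets $\langle y-x,w_\star\rangle\ge\|w_\star\|^2$ for all $x\in C_1$, $y\in C_2$, hence $\|y-x\|\ge\langle y-x,w_\star\rangle/\|w_\star\|=\|w_\star\|=\|x_1-x_2\|$; taking the infimum over $x,y$ and using the obvious reverse inequality gives $\dist(C_1,C_2)=\|x_1-x_2\|$. (One can equivalently route this through Lemma~\ref{lem-proj1}(1), since the same inequalities show $H$ strictly separates any such $x$ and $y$.)

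Finally, for the ``Moreover'' part, with $c=\langle w_\star,\frac{x_1+x_2}{2}\rangle$ I would observe that $\langle x_1,w_\star\rangle-c=\frac12\langle x_1-x_2,w_\star\rangle=-\frac12\|x_2-x_1\|^2<0$, so that the inequality $\langle x,w_\star\rangle\le\langle x_1,w_\star\rangle$ valid on $C_1$ forces $|\langle x,w_\star\rangle-c|\ge|\langle x_1,w_\star\rangle-c|$; by (\ref{distToHyp}) this means $\dist(\cdot,H)$ is minimized over $C_1$ at $x_1$, whence $\dist(H,C_1)=\dist(x_1,H)=\frac{|\langle x_1,w_\star\rangle-c|}{\|w_\star\|}=\frac12\|x_1-x_2\|$ using $\|w_\star\|=\|x_2-x_1\|$; the computation for $C_2$ is symmetric (with $\langle x_2,w_\star\rangle-c=\frac12\|x_2-x_1\|^2>0$). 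The main obstacle --- really the only non-mechanical step --- is the correct double application of Lemma~\ref{lem-proj1}(2): recognizing that the hypotheses $x_1\in\prj_{C_1}(x_2)$ and $x_2\in\prj_{C_2}(x_1)$ are exactly what that lemma asks for, and that the relevant duality value for the second application is $-w_\star$ by oddness of $M$. Everything after that is Cauchy--Schwarz and sign-tracking.
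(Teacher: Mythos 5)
Your proof is correct and follows essentially the same route as the paper: the forward implication is immediate from the definition of $\dist$, and the converse rests on the same double application of Lemma~\ref{lem-proj1}(2) (using that $M$ is odd, so $M(x_1-x_2)=\{-w_\star\}$) to obtain the two variational inequalities $\langle x-x_1,w_\star\rangle\le 0$ on $C_1$ and $\langle y-x_2,w_\star\rangle\ge 0$ on $C_2$. The only cosmetic difference is that you combine these directly via the pairing, $\|y-x\|\,\|w_\star\|\ge\langle y-x,w_\star\rangle\ge\|w_\star\|^2$, whereas the paper sums the resulting lower bounds on $\dist(y_1,H)$ and $\dist(y_2,H)$ with the triangle-type inequality of Lemma~\ref{lem-proj1}(1); your explicit sign-tracking in the ``Moreover'' part matches the paper's conclusion.
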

\begin{proof}
The direct sense is obvious. Consider the reverse one. Since   $M(x_1-x_2)$ is reduced to the single element $-w_\star$, by Lemma \ref{lem-proj1} we have
\begin{eqnarray}
\langle y_1-x_1, w_\star\rangle &\leq& 0, \qquad  \forall y_1\in C_1\label{eqlem1-1},\\
\langle y_2-x_2,-w_\star\rangle & \leq& 0, \qquad  \forall y_2\in C_2\label{eqlem1-2}.
\end{eqnarray}

By Lemma \ref{lem-proj1} again, for all $y_1\in C_1$, $y_2\in C_2$ we have
\begin{equation}\label{eqlem1-3}
\|y_1-y_2\|\geq \dist(y_1,H)+\dist(y_2,H).
\end{equation}
Moreover, we have
\begin{eqnarray}
\dist(y_1,H)- \frac{1}{2}\|x_1-x_2\|
&=& \frac{\langle y_1- \frac{x_1+x_2}{2},  w_\star\rangle}{\|w_\star\|}
 -\frac{\langle x_1-x_2,w_\star \rangle }{2\|w_\star\|}\ \nonumber\\
 &\geq & \frac{\langle y_1-x_1,w_\star\rangle}{\|w_\star\|}\nonumber\\
 &\geq & 0 \qquad (\mbox{by (\ref{eqlem1-1})}) \label{eqlem1-4}
\end{eqnarray}
and in the same manner we obtain
\begin{eqnarray}
\dist(y_2,H)- \frac{1}{2}\|x_1-x_2\| \geq  0. \label{eqlem1-5}
\end{eqnarray}
Summing (\ref{eqlem1-3}),(\ref{eqlem1-4}) and (\ref{eqlem1-5}) we get $ \|y_1-y_2\|\geq \|x_1-x_2\|$ for all $y_1\in C_1, y_2\in C_2$  which proves that $\dist(C_1,C_2)=\|x_1-x_2\|$.\\
By (\ref{eqlem1-4}) (respectively (\ref{eqlem1-5})), we have $
\dist(H,C_1)\geq \frac{1}{2}\|x_1-x_2\|
$   (respectively $\dist(H,C_2)\geq \frac{1}{2}\|x_1-x_2\|$)
and the equality is achieved by $x_1$ (respectively $x_2$) since $\dist(H,x_1)= \frac{1}{2}\|x_1-x_2\|$ (respectively $\dist(H,x_2)=\frac{1}{2}\|x_1-x_2\|$). $\qed$
\end{proof}

\begin{proof} (of Theorem \ref{thm-game})
Suppose that $(\xo{x}_1,\xo{x}_2)$ is a (NE). Let $x_1 \in P_{C_1}(\xo{x}_2)$ and $x_2 \in P_{C_2}(\xo{x}_1)$, their existence is ensured by the reflexivity of $X$  \cite{BorFit}. By Lemma \ref{lem-proj1} we have
\begin{eqnarray}
\|\xo{x}_2-x_1\|&\geq& \dist(\xo{x}_2,H(\xo{x}_1,\xo{x}_2))+ \dist(x_1,H(\xo{x}_1,\xo{x}_2))\nonumber\\
&\geq& \frac{1}{2}\|\xo{x}_2-\xo{x}_1\|+ \dist(C_1,H(\xo{x}_1,\xo{x}_2))\nonumber\\
&\geq&  \frac{1}{2}\|\xo{x}_2-\xo{x}_1\|+ \dist(C_1,H(x_1,\xo{x}_2))\label{eqThG-1} \\
&\geq&  \frac{1}{2}\|\xo{x}_2-\xo{x}_1\|+ \frac{1}{2}  \|\xo{x}_2-x_1\|. \label{eqThG-2}
\end{eqnarray}
The inequality (\ref{eqThG-1}) comes from the fact that $(\xo{x}_1,\xo{x}_2)$ is a (NE)
while (\ref{eqThG-2}) from Lemma \ref{lem-proj1}
applied with two convex sets $C_1$ and $\{\xo{x}_2\}$. We obtain from (\ref{eqThG-2}) $\|\xo{x}_2-\xo{x}_1\|\leq \|\xo{x}_2-x_1\|$ which means that $\xo{x}_1\in \prj_{C_1}(\xo{x}_2).$ Proceeding by the same way we obtain $\xo{x}_2\in \prj_{C_2}(\xo{x}_1)$, that is by Lemma \ref{lem-proj2} $\|\xo{x}_1-\xo{x}_2\|=\dist(C_1,C_2)$.
Conversely, let$(\xo{x}_1,\xo{x}_2)$ such that $\|\xo{x}_1-\xo{x}_2\|=\dist(C_1,C_2)$ and suppose by contradiction that  $(\xo{x}_1,\xo{x}_2)$ is not (NE). Then there exist $x_2\in C_2$ (or $x_1\in C_1$) such that
\begin{equation}\label{eqThG-3} 
\dist(H(\xo{x}_1,x_2),C_2) > \dist(H(\xo{x}_1,\xo{x}_2),C_2)= \frac{1}{2}\|\xo{x}_1-\xo{x}_2\|.
\end{equation}
By Lemma \ref{lem-proj2} we have
\begin{eqnarray*}
\|\xo{x}_1-\xo{x}_2\|&\geq& \dist(\xo{x}_1,H(\xo{x}_1,x_2)) + \dist(\xo{x}_2,H(\xo{x}_1,x_2))\\
&=& \frac{1}{2}\|\xo{x}_1-x_2\|  +\dist(\xo{x}_2,H(\xo{x}_1,x_2))\\
&\geq& \frac{1}{2}\|\xo{x}_1-x_2\|  +\dist(C_2,H(\xo{x}_1,x_2))\\
&\geq& \frac{1}{2}\|\xo{x}_1-x_2\|  +\frac{1}{2}\|\xo{x}_1-\xo{x}_2\| \qquad (\mbox{by } (\ref{eqThG-3})).
\end{eqnarray*}
that is, $\|\xo{x}_1-\xo{x}_2\|>\|\xo{x}_1-x_2\|$. This contradicts the fact that $\xo{x}_1$ and $\xo{x}_2$ are the nearest neighbours. $\qed$

\end{proof}


\section{The non separable case}

Let us now suppose that $K_-$ and $K_+$ are non-linearly separable. A linear robust soft margin SVM training can be formulated by using slack variables  which  measure  the  degree  of  misclassification    of the    observations
 leading to the following relaxed version 
$$
\rsvmc{C}\qquad 
\begin{array}{ll}
 \displaystyle\min_{(w,b,\xi)\in \Xs\times\R\times \R^m_+} & \frac{1}{2}\nd{w}^2 +C\sum_{i=1}^m \xi_i \\
 \sth  & \displaystyle\min_{x_i\in K_i}y_i(\langle x_i, w \rangle + b)\geq 1-\xi_i,\quad \dea{i}{m},
\end{array} 
$$
where $C>0$ is a problem specific constant controlling the trade-off between margin (generalisation)
and classification. The optimistic counterpart of its corresponding uncertain dual is
$$
\odrsvmc{C}\qquad 
\begin{array}{ll}
 \displaystyle\sup_{(\lambda,x)\in \R_+^m\times K} &\sum_{i=1}^m \lambda_i - \frac{1}{2}\left\| \sum_{i=1}^m \lambda_i y_i x_i \right\|^2 \\
 \sth  & \sum_{i=1}^m \lambda_i y_i =0,\\
      & \lambda_i \leq C,\ \dea{i}{m}.  
\end{array} 
$$
In a similar way we formulate the relaxed version of $\rcm$
$$
\rcmd{D} \qquad 
\begin{array}{ll}
 \displaystyle\min_{(w,\alpha,\beta,\xi)\in \Xs\times\R\times\R\times \R^m_+} & \frac{1}{2}\nd{w}^2  -(\alpha-\beta)+D\sum_{i=1}^m\xi_i\\
 \sth  & \displaystyle\min_{x_i\in K_i} \langle x_i, w \rangle \geq \alpha -\xi_i,\quad  i \in I_+,\\
      & \displaystyle\max_{x_i\in K_i} \langle x_i, w \rangle \leq \beta +\xi_i,\quad  i \in I_-\\
\end{array}
$$
whose optimistic counterpart of its corresponding uncertain dual is
$$
\odrcmd{D}\qquad 
\begin{array}{ll}
 \displaystyle\sup_{(\lambda,x)\in \R_+^m\times K} &  - \frac{1}{2}\left\| \sum_{i=1}^m \lambda_i y_i x_i \right\|^2 \\
 \sth  & \sum_{i\in I_\circ} \lambda_i =1, \quad \circ \in \{+,-\},\\
      & \lambda_i \leq D,\ \dea{i}{m}.
\end{array} 
$$
This is in fact not else but the problem of minimizing the (squared) distance
between the two convex sets
$$K_\circ(D)=
\left\{\displaystyle\sum_{i\in {I_\circ}}\alpha_i x_i : x_i \in K_i, 0\leq \alpha_i\leq D, i \in I_\circ \mbox{ and }   \displaystyle\sum_{i \in I_\circ}\alpha_i =1 \right\}, \circ\in\{-,+\},
$$
corresponding to Reduced Convex Hull following the terminology of \cite{BenBre}.  Under this form, it is clear that reducing $D$ sufficiently will ensure separability of the problem.
The results established in the separable case can, by almost similar arguments, be extended to this non-separable case.
In particular, as in \cite{BenBre}, we can show that optimizing R-SVM($C$) is equivalent to optimizing R-CM($D$). The parameters $C$ and $D$ are related by multiplication of a constant factor as shown by the following theorem.
\begin{theorem}
Assume that the uncertainty sets $K_i$,  \dea{i}{m}, are convex and weakly compact, then
\begin{enumerate}
\item 
If $(\xo{w},\xo{b},\xo{\xi};\xo{\lambda},\xo{x})$  is solution for $\rsvmc{C}-\odrsvmc{C}$, with $\xo{w}\neq 0$, then
$$\left(\frac{2\xo{w}}{\sum_{i=1}^{m}\xo{\lambda}_i} , 
\frac{2(1-\xo{b})}{\sum_{i=1}^{m}\xo{\lambda}_i},\frac{2(-1-\xo{b})}{\sum_{i=1}^{m}\xo{\lambda}_i}, \frac{2\xo{\xi}}{\sum_{i=1}^{m}\xo{\lambda}_i};
\frac{2\xo{\lambda}}{\sum_{i=1}^{m}\xo{\lambda}_i},\xo{x}\right)$$ is a solution to $\rcmd{\frac{2C}{\sum_{i=1}^{m}\xo{\lambda}_i}}-
\odrcmd{\frac{2C}{\sum_{i=1}^{m}\xo{\lambda}_i}}.$
\item
If $(\xo{w},\xo{\alpha},\xo{\beta},\xo{\xi};\xo{\lambda},\xo{x})$ is solution to $\rcmd{D}-\odrcmd{D}$, with $\xo{w}\neq 0$, then  
$$\left(\frac{2\xo{w}}{\xo{\alpha}-\xo{\beta}} ,
-\frac{\xo{\alpha}+\xo{\beta}}{\xo{\alpha}-\xo{\beta}},\frac{2\xo{\xi}}{\xo{\alpha}-\xo{\beta}} ;  \frac{2\xo{\lambda}}{\xo{\alpha}-\xo{\beta}},\xo{x}\right)$$
is a solution for $\rsvmc{\frac{2D}{\xo{\alpha}-\xo{\beta}}}-\odrsvmc{\frac{2D}{\xo{\alpha}-\xo{\beta}}}$.

\end{enumerate}
\end{theorem}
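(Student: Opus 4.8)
The strategy mirrors the proof of the earlier equivalence theorem for $\rsvm$--$\odrsvm$ and $\rcm$--$\odrcm$, upgraded to carry the slack variables $\xi$ along. The key device is again Proposition \ref{prop-SP}: a solution to a robust primal--optimistic dual pair is exactly a saddle point of the associated uncertain Lagrangian. So the plan is to write down the uncertain Lagrangians $L_1^C$ for $\rsvmc{C}$ and $L_2^D$ for $\rcmd{D}$ (on $\Xs\times\R\times\R^m_+\times\R^m_+\times K$ and $\Xs\times\R\times\R\times\R^m_+\times\R^m_+\times K$ respectively), and show that the affine change of variables in the statement maps a saddle point of one onto a saddle point of the other, provided the scaling factor in $D$ is chosen as prescribed.

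First I would record the two Lagrangians explicitly. For $\rsvmc{C}$,
\[
L_1^C(w,b,\xi;\lambda,x)=\tfrac12\|w\|^2+C\textstyle\sum_i\xi_i+\sum_i\lambda_i\bigl(1-\xi_i-y_i(\langle x_i,w\rangle+b)\bigr),
\]
and for $\rcmd{D}$ the analogue with $-(\alpha-\beta)+D\sum_i\xi_i$ and the two groups of constraints. The hypothesis $\xo{w}\neq0$ guarantees, via the KKT conditions (\ref{sdLw1})--(\ref{sdLb1}) read off from $\partial_{w,b,\xi}L_1^C=0$, that $\sum_i\xo\lambda_i\neq0$ in case 1 (because $\xo{w}\in M(\sum_i y_i\xo\lambda_i\xo x_i)$ forces $\sum_i y_i\xo\lambda_i\xo x_i\neq0$, and separation of $K_+,K_-$ then prevents $\sum_i\xo\lambda_i=0$); symmetrically $\xo\alpha-\xo\beta\neq0$ in case 2. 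This legitimises the divisions.

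The core computation is then exactly the substitution already performed in the separable proof, with one extra coordinate. Starting from the saddle-point inequalities for $L_2^D$ at $(\xo{w},\xo\alpha,\xo\beta,\xo\xi;\xo\lambda,\xo x)$, I substitute $(\tfrac{\xo\alpha-\xo\beta}{2})w$ for $w$, $\tfrac{\xo\alpha-\xo\beta}{2}(1-b)$ for $\alpha$, $-\tfrac{\xo\alpha-\xo\beta}{2}(1+b)$ for $\beta$, $\tfrac{\xo\alpha-\xo\beta}{2}\xi$ for $\xi$, and $\tfrac{\xo\alpha-\xo\beta}{2}\lambda$ for $\lambda$, then divide through by $\bigl(\tfrac{\xo\alpha-\xo\beta}{2}\bigr)^2$. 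Using $\tfrac{2\xo\alpha}{\xo\alpha-\xo\beta}=1+\tfrac{\xo\alpha+\xo\beta}{\xo\alpha-\xo\beta}$ and $\tfrac{2\xo\beta}{\xo\alpha-\xo\beta}=-1+\tfrac{\xo\alpha+\xo\beta}{\xo\alpha-\xo\beta}$, the inequalities collapse to the saddle-point inequalities for $L_1^{C'}$ at the claimed point, where the slack-penalty term $D\sum_i\xi_i$ rescales to $\tfrac{2D}{\xo\alpha-\xo\beta}\sum_i\xi_i$, i.e.\ $C'=\tfrac{2D}{\xo\alpha-\xo\beta}$; one checks the box constraint $\xo\lambda_i\le D$ becomes $\tfrac{2\xo\lambda_i}{\xo\alpha-\xo\beta}\le C'$, which is built into the dual feasible sets. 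The reverse direction (case 1) is the mirror image, substituting $\tfrac{\sum_i\xo\lambda_i}{2}w$, $\tfrac{\sum_i\xo\lambda_i}{2}(\alpha-\beta)$, $\tfrac{\sum_i\xo\lambda_i}{2}\xi$, $\tfrac{\sum_i\xo\lambda_i}{2}\lambda$ and dividing by $\bigl(\tfrac{\sum_i\xo\lambda_i}{2}\bigr)^2$, yielding $D'=\tfrac{2C}{\sum_i\xo\lambda_i}$.

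The main obstacle, such as it is, is bookkeeping rather than conceptual: one must verify that the affine substitution is a bijection of the relevant constraint/feasibility sets (in particular that it respects $\xi\ge0$, $\lambda\ge0$, and the box constraints $\lambda_i\le C$ resp.\ $\lambda_i\le D$), and that the extra $\xi$-block in the Lagrangian transforms cleanly — which it does, since it is affine in $(w,b,\alpha,\beta)$ and the slack term is linear in $\xi$ with a constant coefficient. Since Proposition \ref{prop-SP} already encodes the KKT equivalences and the saddle-point characterisation of primal--dual solutions, once the substitution is checked the conclusion follows, and by Proposition \ref{prop-SP} (together with \cite[Theorem 49.B]{Zeid}) the transformed point is indeed a solution of the corresponding primal--optimistic-dual pair. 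I would therefore present only the $L_2^D\to L_1^{C'}$ direction in detail and remark that the converse is obtained by the symmetric substitution, exactly as in the separable case.
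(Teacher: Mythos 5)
Your proposal follows exactly the route the paper intends for this theorem: the paper omits the proof, stating only that the separable-case argument (the saddle-point substitution for $L_1$ and $L_2$ via Proposition \ref{prop-SP}) extends ``by almost similar arguments,'' and your plan carries out precisely that extension, adding the $\xi$-coordinate to the affine substitution and reading off the rescaled penalty parameter. The scaling $C'=\frac{2D}{\xo{\alpha}-\xo{\beta}}$ (resp.\ $D'=\frac{2C}{\sum_i\xo{\lambda}_i}$) and the treatment of the box constraints match the statement, so the proposal is correct and essentially identical in approach.
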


\section{Conclusion}
This theoretical analysis is an additional step towards the generalisation of formulations of binary classification problems in Banach spaces. In \cite{DerLee}, it had already been shown that classical SVM formulations nicely extends to Banach spaces by the use of semi-inner products. The authors had shown that most of hard margin separation results in Hilbert spaces remain valid in the non Euclidean setting when considering an appropriate alternative to inner products. In our study, we show that using the duality product, we not only also retrieve the binary classification formulation but robust formulations can also be derived when data uncertainties lie in Banach spaces. Furthermore, using the classification formulation based on the duality product, we show that game theoretic interpretations can also be made. This bridge between game theory and classification of complex data (represented in Banach spaces rather than Hilbert spaces) opens new opportunities for exploiting theoretical and numerical results from both worlds.

\section*{Acknowledgment} The authors would like to thank Prof. Jean-Baptiste Hiriart-Urruty (Universit\'{e} Toulouse III- Paul Sabatier, France) for his  insightful and constructive discussions about this research. This work has partially benefited from the AI Interdisciplinary Institute ANITI. ANITI is funded by the French ”Investing for the Future - PIA3” program under
the Grant agreement \# ANR-19-PI3A-0004.

%
%



\end{document}